\theoremstyle{plain}
\newtheorem{theorem}{Theorem}[section]
\newtheorem{lemma}[theorem]{Lemma}
\theoremstyle{definition}
\newtheorem{definition}[theorem]{Definition}
\theoremstyle{remark}
\newif\ifdraft
\definecolor{darkgreen}{rgb}{0,0.4,0.0}
\newcommand{\btodo}[1]{[{\color{darkgreen}{\textbf{TODO:} \emph{#1}}}]}
\newcommand{\btodo}[1]{}
\newcommand{\norm}{{\mathcal{N}}}
\newcommand{\R}{{\mathbb{R}}}
\newcommand{\EXP}[1]{\mathbb{E}\! \left[#1\right]}
\newcommand{\VAR}[1]{\text{Var}\! \left[#1\right]}
\newcommand{\arrowp}{\overset{p}{\longrightarrow}}
\newcommand{\Var}{\operatorname{Var}}
\newcommand{\Exp}{\mathbb{E}}
\renewcommand*{\thefootnote}{\fnsymbol{footnote}}
\title{One-shot Empirical Privacy Estimation \\ for Federated Learning}
\author{Galen Andrew, Peter Kairouz, Sewoong Oh \& H.\ Brendan McMahan \\
Google Research\\
\texttt{\{galenandrew,kairouz,sewoongo,mcmahan\}@google.com} \\
\And
Alina Oprea \\
KCCS, Northeastern University \\
\texttt{a.oprea@northeastern.edu} \\
\AND
Vinith M.\ Suriyakumar \\
EECS, MIT \\
\texttt{vinithms@mit.edu}
}
\begin{document}

\maketitle

\begin{abstract}

Privacy estimation techniques for differentially private (DP) algorithms are useful for comparing against analytical bounds, or to empirically measure privacy loss in settings where known analytical bounds are not tight. However, existing privacy auditing techniques usually make strong assumptions on the adversary (e.g., knowledge of  intermediate model iterates or the training data distribution), are tailored to specific tasks, model architectures, or DP algorithm, and/or require retraining the model many times (typically on the order of thousands). These shortcomings make deploying such techniques at scale difficult in practice, especially in federated settings where model training can take days or weeks. In this work, we present a novel ``one-shot'' approach that can systematically address these challenges, allowing efficient auditing or estimation of the privacy loss of a model during the same, single training run used to fit model parameters, and without requiring any \emph{a priori} knowledge about the model architecture, task, or DP training algorithm. We show that our method provides provably correct estimates for the privacy loss under the Gaussian mechanism, and we demonstrate its performance on well-established FL benchmark datasets under several adversarial threat models. 

\end{abstract}

\section{Introduction}
\label{sec:intro}


\renewcommand*{\thefootnote}{\arabic{footnote}}

Federated learning (FL)~\citep{mcmahan17communication, kairouz2021advances} is a paradigm for training machine learning models on decentralized data. At each round, selected clients contribute model updates to be aggregated by a server, without ever communicating their raw data. FL incorporates \emph{data minimization} principles to reduce the risk of compromising anyone's data: each user's data never leaves their device, the update that is transmitted contains only information necessary to update the model, the update is encrypted in transit, and the update exists only ephemerally before being combined with other clients' updates and then incorporated into the model \citep{bonawitz22cacm}. Technologies such as secure aggregation \citep{bonawitz17practical,bell2020secure} can be applied to ensure that even the central server cannot inspect individual updates, but only their aggregate.

However, these data minimization approaches cannot rule out the possibility that an attacker might learn some private information from the training data by directly interrogating the final model~\citep{MemorizationGPT2,balle2022reconstructing,haim2022reconstructing}. To protect against this, \emph{data anonymization} for the model is required. FL can be augmented to satisfy user-level differential privacy~\citep{dwork14dp,abadi16deep, mcmahan18learning}, the gold-standard for data anonymization. DP can guarantee each user that a powerful attacker -- one who knows all other users' data, all details about the algorithm (other than the values of the noise added for DP), and every intermediate model update -- still cannot confidently infer the presence of that user in the population, or anything about their data. This guarantee is typically quantified by the parameters $\varepsilon$ and $\delta$, with lower values corresponding to higher privacy (less confidence for the attacker).

DP is often complemented by \textit{empirical privacy estimation} techniques, such as membership inference attacks~\citep{shokri2017membership,yeom2018privacy,LiRA}, which measure the success of an adversary at distinguishing whether a particular record was part of training or not.\footnote{Some prior work only applies to example-level DP, in which \emph{records} correspond to examples, as opposed to user-level, in which \emph{records} are users. We will describe our approach in terms of user-level DP, but it can be modified to provide example-level DP by using DP-SGD in place of DP-FedAvg.} Such methods have been used to audit the implementations of DP mechanisms or claims about models trained with DP~\citep{Audit-DPSGD,nasr2021adversary,pmlr-v202-zanella-beguelin23a,lu2022_auditing}. They are also useful for estimating the privacy loss in cases where a tight analytical upper bound on $\varepsilon$ is unknown, for example when clients are constrained to participate in at most some number of rounds, or when the adversary does not see the full trace of model iterates. However, existing privacy auditing techniques suffer from several major shortcomings. First, they require retraining the model many times (at least thousands) to provide reliable estimates of DP's $\varepsilon$~\citep{Audit-DPSGD,nasr2021adversary}. Second, they often rely on knowledge of the model architecture and/or the underlying dataset (or at least a similar, proxy dataset) for mounting the attack. For example, a common approach is to craft a ``canary'' training example on which the membership is being tested, which typically requires an adversary to have access to the underlying dataset and knowledge of the domain and model architecture. Finally, such techniques usually grant the adversary unrealistic power, for example (and in particular) the ability to inspect all model iterates during training~\citep{maddock2022canife}, something which may or may not be reasonable depending on the system release model.


Such assumptions are particularly difficult to satisfy in FL due to the following considerations:
\vspace{-0.1cm}
\begin{itemize}[leftmargin=*]
\item \textbf{Minimal access to the dataset, or even to proxy data.} A primary motivating feature of FL is is that it can make use of on-device data without (any) centralized data collection. 
In many tasks, on-device data is more representative of real-world user behavior than any available proxy data.
\vspace{-0.1cm}
\item \textbf{Infeasibility of training many times, or even more than one time.} FL training can take days or weeks, and expends resources on client devices. To minimize auditing time and client resource usage, an ideal auditing technique should produce an estimate of privacy during the same, single training run used to optimize model parameters, and without significant overhead from crafting examples or computing additional ``fake'' training rounds.
\vspace{-0.1cm}
\item \textbf{Lack of task, domain, and model architecture knowledge.} A scalable production FL platform is expected to cater to the needs of many diverse ML applications, from speech to image to language modeling tasks. Therefore, using techniques that require specific knowledge of the task and/or model architecture makes it hard to deploy those techniques at scale in production settings. 
\vspace{-0.1cm}
\end{itemize}

In this paper, we design an auditing technique tailored for FL usage with those considerations in mind. We empirically estimate $\varepsilon$ efficiently under user-level DP federated learning by measuring the training algorithm's tendency to memorize arbitrary clients' updates. We insert multiple canary clients in the federated learning protocol with independent random model updates, and design a test statistic based on the cosine angle of each canary update with the final model to test participation of that canary client in the protocol. The intuition behind the approach comes from the elementary result that in a high-dimensional space, isotropically sampled vectors are nearly orthogonal with high probability. So we can think of each canary as estimating the algorithm's tendency to memorize along a dimension of variance that is independent of the other canaries, and of the true model updates.

Our method has several favorable properties. It can be applied during the same, single training run which is used to train the federated model parameters, and therefore does not incur additional performance overhead. Although it does inject some extra noise into the training process, the effect on model quality is negligible, provided model dimensionality and number of clients are reasonably sized.  We show that in the tractable case of a single application of the Gaussian mechanism, our method provably recovers the true, analytical $\varepsilon$ in the limit of high dimensionality. We evaluate privacy loss for several adversarial models of interest, for which existing analytical bounds are not tight. In the case when  all intermediate updates are observed and the noise is low, our method produces high values of $\varepsilon$, indicating that an attacker could successfully mount a membership inference attack. However, in the common and important case that only the final trained model is released, our $\varepsilon$ estimate is far lower, suggesting that adding a modest amount of noise is sufficient to prevent leakage, as has been observed by practitioners. Our method can also be used to explore how leakage changes as aspects of the training protocol change, for which no tight theoretical analysis is known, for example if we limit client participation.
The method we propose is model and dataset agnostic, so it can be easily applied without change to any federated learning task. 

\section{Background and related work} \label{sec:related}

\paragraph{Differential privacy.} 

Differential privacy (DP)~\citep{DworkMNS06,dwork14dp} is a rigorous notion of 
privacy that an algorithm can satisfy. DP algorithms for training ML models include DP-SGD~\citep{abadi16deep}, DP-FTRL~\citep{kairouz2021practical}, and DP matrix factorization~\citep{denissov2022improved,choquette22multi}.  Informally, DP guarantees that a powerful attacker observing the output of the algorithm $A$ trained on one of two \emph{adjacent} datasets (differing by addition or removal of one record), $D$ or $D'$, cannot confidently distinguish the two cases, which is quantified by the privacy parameters~$\epsilon$ and $\delta$. 

%
\begin{definition}{\bf User-level differential privacy.} The training algorithm $A:\mathcal{D} \rightarrow \mathcal{R}$ is user-level $(\epsilon,\delta)$ differentially private if for all pairs of datasets $D$ and $D'$ from $\mathcal{D}$ that differ only by addition or removal of the data of one user and all output regions $R \subseteq \mathcal{R}$:
$$\Pr[A(D) \in R] \le e^\epsilon \Pr[A(D') \in R] + \delta.$$
\end{definition}
DP can be interpreted as a hypothesis test with the null hypothesis that $A$ was trained on $D$ and the alternative hypothesis that $A$ was trained on $D'$. False positives (type-I errors) occur when the null hypothesis is true, but is rejected, while false negatives (type-II errors) occur when the alternative hypothesis is true, but is rejected. \citet{pmlr-v37-kairouz15} characterized $(\epsilon,\delta)$-DP in terms of the false positive rate (FPR) and false negative rate (FNR) achievable by an acceptance region. This characterization enables estimating the privacy parameter as:

\begin{equation} \label{eq:eps-from-fpr-fnr}
\hat{\epsilon} = \max\left\{\log \frac{1-\delta- \mbox{FPR}}{\mbox{FNR}}, \log \frac{1-\delta-\mbox{FNR}}{\mbox{FPR}}\right\}.
\end{equation}

\paragraph{Private federated fearning.} DP Federated Averaging (DP-FedAvg)~\citep{mcmahan18learning} is a user-level DP version of the well-known Federated Averaging (FedAvg) algorithm~\citep{mcmahan17communication} for training ML models in a distributed fashion. In FedAvg, a central server interacts with a set of clients to train a global model iteratively over multiple rounds. In each round, the server sends the current global model to a subset of clients, who train local models using their training data, and send the model updates back to the server. The server aggregates the model updates via the Gaussian mechanism, in which each update is clipped to bound its $\ell_2$ norm before averaging and adding Gaussian noise proportional to the clipping norm sufficient to mask the influence of individual users, and incorporates the aggregate update into the global model. DP-FedAvg can rely on privacy amplification from the sampling of clients at each round, but more sophisticated methods can handle arbitrary participation patterns~\citep{kairouz2021practical,choquette22multi}.


\paragraph{Privacy auditing.}
Privacy auditing \citep{ding2018detecting,LO19,gilbert2018property,Audit-DPSGD} is a set of techniques for empirically auditing the privacy leakage of an algorithm. The main technique used for privacy auditing is mounting a membership inference attack \citep{shokri2017membership,yeom2018privacy,LiRA} and translating the success of the adversary into an $\varepsilon$ estimate using Eq.~ \eqref{eq:eps-from-fpr-fnr} directly.

Most privacy auditing techniques \citep{Audit-DPSGD,nasr2021adversary,lu2022_auditing,pmlr-v202-zanella-beguelin23a} have been designed for centralized settings, with the exception of CANIFE~\citep{maddock2022canife}, suitable for privacy auditing of federated learning deployments. CANIFE operates under a strong adversarial model, assuming knowledge of all intermediary model updates, as well as local model updates sent by a subset of clients in each round of training. CANIFE crafts data poisoning canaries adaptively, with the goal of generating model updates orthogonal to updates sent by other clients in each round. We argue that when the model dimensionality is sufficiently high, such crafting is unnecessary, since a randomly chosen canary update will already be essentially orthogonal to the true updates with high probability. CANIFE also computes a \emph{per-round} privacy measure, which it extrapolates into a measure for the entire training run by estimating an equivalent per-round noise $\hat{\sigma}_r$, and then composing the RDP of the repeated Poisson subsampled Gaussian mechanism. However, in practice FL systems do not use Poisson subsampling due to the infeasibility of sampling clients i.i.d.\ at each round. Our method flexibly estimates the privacy loss in the context of arbitrary participation patterns, for example passing over the data in epochs, or the difficult-to-characterize \emph{de facto} pattern of participation in a deployed system, which may include techniques intended to amplify privacy such as limits on client participation within temporal periods such as one day. We present an empirical comparison of our approach with CANIFE in Section~\ref{sec:canife}.

\begin{table}
\begin{center}
{\small
\begin{tabular}{|c|l| l l |} 
 \hline
 & &  auditor controls & auditor receives \\ 
 \hline\hline
  \parbox[t]{2mm}{\multirow{6}{*}{\rotatebox[origin=c]{90}{Central}}}
  &\cite{Audit-DPSGD} & train data    & final model\\
  &\cite{pmlr-v202-zanella-beguelin23a} & train data    & final model\\ 
 &\cite{pillutla2024unleashing} &  train data & final model \\ 
 &\cite{steinke2024privacy} &train data &  final model \\
 &\cite{jagielski2023combine} &train data &  intermediate models \\
 & \cite{nasr2023tight} & train data, privacy noise, minibatch & intermediate models \\
\hline
\parbox[t]{2mm}{\multirow{3}{*}{\rotatebox[origin=c]{90}{FL}}}&Algorithm~\ref{alg:final} & client model update  & final model\\ 
 & Algorithm~\ref{alg:all} & client model update & intermediate models\\
 & CANIFE \citep{maddock2022canife}  &client sample, privacy noise, minibatch & intermediate models \\\hline 
\end{tabular}
\caption{Assumptions of different auditing approaches from the literature. For each paper, we state the most relaxed condition the technique can be applied to, since they can be generalized in straightforward manner to scenarios with more strict assumptions on the auditor's control and observation. Within each category of \{central training, federated learning\}, the lists are ordered from least to most strict assumptions.}
\label{tab:related}
}
\end{center}
\end{table}
Related work in privacy auditing/estimation makes differing assumptions on the auditor's knowledge and capability. The standard assumption in auditing centralized private training algorithm is a black-box setting where the auditor only gets to control the training data and observes the final model output. In practice, many private training algorithms guarantee privacy under releasing all the intermediate model checkpoints. One can hope to improve the estimate of privacy by using those checkpoints as do \citet{jagielski2023combine}. If the auditor can use information about how the minibatch sequence is drawn and the distribution of the privacy noise, which is equivalent to assuming that the auditor controls the privacy noise and the minibatch, one can further improve the estimates. 

In the federated learning scenario, we assume a canary client can return any model update. Note that while CANIFE only controls the sample of the canary client and not the model update directly, CANIFE utilises the Renyi-DP accountant with
Poisson subsampling implemented via the Opacus library, which is equivalent to the auditor fully controlling the sequence of minibatches (cohorts in FL terminology). Further, the privacy noise is assumed to be independent spherical Gaussian, which is equivalent to the auditor fully controlling the noise.

Table~\ref{tab:related} compares the assumptions of different auditing approaches from the literature.

\section{One-shot privacy estimation for the Gaussian mechanism}
\label{sec:gauss}

As a warm-up, we start by considering the problem of estimating the privacy of the Gaussian mechanism, the fundamental building block of DP-SGD and DP-FedAvg. To be precise, given $D = (x_1, \cdots, x_m)$, with $\|x_j \| \leq 1$ for all $j \in [m]$, the output of the Gaussian vector sum query is $A(D) = \bar{x}  + \sigma Z$, where $\bar{x} = \sum_j x_j$ and $Z \sim \norm(0, I)$. Without loss of generality, we can consider a neighboring dataset $D'$ with an additional vector $x_0$ with $\|x_0\| \leq 1$. Thus, $A(D) \sim \norm(\bar{x}, \sigma^2 I)$ and $A(D') \sim \norm(\bar{x} + x_0, \sigma^2 I)$. For the purpose of computing the DP guarantees, this mechanism is equivalent to analyzing $A(D) \sim \norm(0, \sigma^2)$ and $A(D') \sim \norm(1, \sigma^2)$ due to translational invariance and spherical symmetry.   

The naive approach for estimating the $\varepsilon$ of an implementation of the Gaussian mechanism would run it many times (say 1000 times), with half of the runs on $D$ and the other half on $D'$. Then the outputs of these runs are shuffled and given to an ``attacker'' who attempts to determine for each output whether it was computed from $D$ or $D'$. Finally, the performance of the attacker is quantified in terms of FPR/FNR, and Eq.~\eqref{eq:eps-from-fpr-fnr} is used to obtain an estimate of the mechanism's $\varepsilon$ at a target $\delta$. 

We now present a provably correct approach for estimating $\varepsilon$ by running the mechanism \textit{only once}. The key idea is to augment the original dataset with $k$ canary vectors $c_i$ for $i \in [k]$, sampled i.i.d.\ uniformly at random from the unit sphere $\mathbb{S}^{d-1} = \{x \in \R^d\!\!: \|x\|=1\}$, obtaining $D= (x_1, \cdots, x_m, c_1, \cdots, c_k)$. We consider $k$ neighboring datasets, each excluding one of the canaries, i.e., $D'_i = D \setminus \{c_i\}$ for $i \in [k]$. We run the Gaussian mechanism once on $D$ and use its output to compute $k$ test statistics $\{g_i\}_{i\in[k]}$, the cosine of the angles between the output and each one of the $k$ canary vectors. We use these $k$ cosines to estimate the distribution of test statistic on $D$ by computing the sample mean $\hat{\mu} = \frac{1}{k}\sum_{i=1}^k g_i$ and sample variance $\hat{\sigma}^2 = \frac{1}{k}\sum_{i=1}^k \bigl( g_i - \hat{\mu} \bigr)^2$ and fitting a Gaussian $\norm(\hat{\mu}, \hat{\sigma}^2)$. To estimate the distribution of the test statistic on $D'$, it would seem we need to run the mechanism on each $D'_i$ and compute the cosine of the angle between the output vector and $c_i$. This is where our choice of $(i)$ independent isotropically distributed canaries and $(ii)$ cosine angles as our test statistic are particularly useful. The distribution of the cosine of the angle between an isotropically distributed \emph{unobserved} canary and the mechanism output (or any independent vector) can be described in a closed form; there is no need to approximate this distribution with samples. We will show in Propositions \ref{thm:cos-exact} and \ref{thm:cos-approx} that this distribution can be well approximated by $\norm(0, 1/d)$.

Now that we have models of the distribution of the test statistic on $D$ and $D'$, we can compute $\varepsilon$ from \eqref{eq:eps-from-fpr-fnr} using the methods of \citet{steinke2022composition} using privacy loss distributions. The details of our algorithm to compute the exact $\varepsilon$ when the null and alternate hypotheses are arbitrary Gaussians are given in Appendix \ref{sec:compute-epsilon}. We note that unlike the case of comparing two Gaussian distributions with equal variance (which has been explored previously, e.g. by \citet{balle2018improving}), the optimal acceptance region is not characterized by a simple threshold rule.

Our method is summarized in Algorithm \ref{alg:gaussian-sum}. 

\begin{algorithm}
  \caption{One-shot privacy estimation for Gaussian mechanism.} \label{alg:gaussian-sum}
\begin{multicols}{2}
\begin{algorithmic}[1]
    \State \textbf{Input:} Vectors $x_1, \cdots, x_m$ with $\|x_j \| \leq 1$, DP noise variance $\sigma^2$, and target $\delta$ 
        \State $\rho \leftarrow \sum_{j \in [m]} x_j$
        \For{$i \in [k]$}
            \State  Draw $c_i$ i.i.d.\ from $\mathbb{S}^{d-1}$
            \State $\rho \leftarrow \rho + c_i$
        \EndFor
        \State Release $\rho \leftarrow \rho + \norm(0, \sigma^2 I)$
    \For{$i \in [k]$}
        \State $g_i \leftarrow \langle c_i, \rho \rangle /  \|\rho\|$
    \EndFor
    \State $\hat{\mu}, \hat{\sigma} \leftarrow \mathbf{mean}(\{g_i\}), \mathbf{std}(\{g_i\})$
    \State $\hat{\varepsilon} \leftarrow \varepsilon(\norm(0, 1/d)\ ||\ \norm(\hat{\mu}, \hat{\sigma}^2) ; \delta)$

\end{algorithmic}
\end{multicols}
\end{algorithm}

\begin{restatable}{proposition}{cosexact} \label{thm:cos-exact}
For $d \in \mathbb{N}, d \geq 2$, let $c$ be sampled uniformly from $\mathbb{S}^{d-1}$, and let $\tau_d = \langle c, v \rangle/\|v\| \in [-1, 1]$ be the cosine similarity between $c$ and some arbitrary independent nonzero vector $v$. Then, the probability density function of $\tau_d$ is \[ f_d(t) = \frac{\Gamma(\frac{d}{2})}{\Gamma(\frac{d-1}{2}) \sqrt{\pi}} (1-t^2)^\frac{d-3}{2}. \]
\end{restatable}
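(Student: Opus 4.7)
The plan is to exploit the rotational invariance of the uniform measure on $\mathbb{S}^{d-1}$ to reduce to a one-coordinate marginal calculation, and then compute that marginal density by a slab-of-the-sphere surface-area argument (or equivalently via a Beta-distribution identity).

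First I would observe that because $c$ is independent of $v$, we may condition on $v$; and because the uniform distribution on $\mathbb{S}^{d-1}$ is invariant under orthogonal transformations, we can apply the rotation sending $v/\|v\|$ to $e_1 = (1, 0, \ldots, 0)$ without changing the law of $c$. After this reduction, $\tau_d = \langle c, v\rangle/\|v\|$ has the same distribution as $c_1$, the first coordinate of a uniform random point on $\mathbb{S}^{d-1}$. This step is the conceptual heart of the argument and removes all dependence on $v$.

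Next I would compute the density of $c_1$. Parameterizing the sphere as $c = (t, \sqrt{1-t^2}\,\omega)$ with $t \in [-1,1]$ and $\omega \in \mathbb{S}^{d-2}$, the slab $\{c_1 \in [t, t+dt]\}$ is a $(d-2)$-sphere of radius $\sqrt{1-t^2}$, and its infinitesimal surface area on $\mathbb{S}^{d-1}$ equals $A(\mathbb{S}^{d-2})\,(1-t^2)^{(d-2)/2} \cdot (1-t^2)^{-1/2} \, dt$, where the factor $(1-t^2)^{-1/2}$ comes from the arc-length element along the $e_1$-axis. Dividing by $A(\mathbb{S}^{d-1})$ yields
\[
 f_d(t) \;=\; \frac{A(\mathbb{S}^{d-2})}{A(\mathbb{S}^{d-1})}\,(1-t^2)^{(d-3)/2}.
\]

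Finally I would plug in the standard values $A(\mathbb{S}^{n-1}) = 2\pi^{n/2}/\Gamma(n/2)$ to get the ratio $\Gamma(d/2)/(\sqrt{\pi}\,\Gamma((d-1)/2))$, completing the identity. As a sanity check, one could alternatively derive the same density by writing $c = X/\|X\|$ for $X \sim \mathcal{N}(0, I_d)$, noting that $c_1^2 \sim \mathrm{Beta}(1/2, (d-1)/2)$, and using symmetry of $c_1$ to transfer the Beta density to $[-1,1]$; the normalization then comes from $B(1/2, (d-1)/2) = \sqrt{\pi}\,\Gamma((d-1)/2)/\Gamma(d/2)$. There is no real obstacle here — the only thing to be careful about is tracking the Jacobian factor $(1-t^2)^{-1/2}$ in the slab area (or equivalently the $2|t|$ Jacobian in the Beta change-of-variables), which is exactly what produces the exponent $(d-3)/2$ rather than $(d-2)/2$.
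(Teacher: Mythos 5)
Your proof is correct and follows essentially the same surface-measure argument as the paper: reduce by rotational invariance to the first-coordinate marginal, compute the area of a thin spherical slab, and normalize by the total sphere area. The only cosmetic difference is that the paper parameterizes by the angle $\theta$ (differentiating the hyperspherical-cap area $M_d(\theta)$ and then changing variables to $t = \cos\theta$), whereas you work directly in $t$ with the Jacobian factor $(1-t^2)^{-1/2}$ absorbed into the slab thickness; these are the same computation written in different coordinates.
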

(All proofs are in Appendix \ref{sec:proofs}.) The indefinite integral of $f_d$ can be expressed via the hypergeometric function, or approximated numerically, but we use the fact that the distribution is asymptotically Gaussian with mean zero and variance $1/d$ as shown in the following proposition.

\begin{restatable}{proposition}{cosapprox} \label{thm:cos-approx}
Let $\tau_d$ be the random variable of \Cref{thm:cos-exact} in $\R^d$. Then $\tau_d \sqrt{d}$ converges in distribution to $\mathcal{N}(0, 1)$ as $d \to \infty$, i.e., $\forall \lambda \in \R$, $\lim_{d \to \infty} \mathbb{P}(\tau_d \le \lambda/\sqrt{d}) = \mathbb{P}_{Z \sim \mathcal{N}(0, 1)}(Z \le \lambda)$.
\end{restatable}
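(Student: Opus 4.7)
The plan is to prove pointwise convergence of the density of $\tau_d\sqrt{d}$ to the standard normal density, then invoke Scheffé's lemma to conclude convergence in distribution. By the change of variables $u = t\sqrt{d}$, the density of $\tau_d\sqrt{d}$ is
\[
  h_d(u) \;=\; \frac{1}{\sqrt{d}}\, f_d(u/\sqrt{d}) \;=\; \frac{\Gamma(d/2)}{\Gamma((d-1)/2)\,\sqrt{\pi d}} \left(1 - \frac{u^2}{d}\right)^{(d-3)/2},
\]
valid for $|u| < \sqrt{d}$ (and zero outside). So it suffices to show $h_d(u) \to \frac{1}{\sqrt{2\pi}} e^{-u^2/2}$ pointwise for each fixed $u \in \mathbb{R}$.

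First I would handle the exponential factor: for any fixed $u$, once $d$ is large enough that $u^2 < d$, a standard computation (take the logarithm, expand $\log(1 - u^2/d) = -u^2/d + O(1/d^2)$, multiply by $(d-3)/2$) yields $(1 - u^2/d)^{(d-3)/2} \to e^{-u^2/2}$. Next I would handle the gamma-ratio prefactor using the classical asymptotic $\Gamma(x + 1/2)/\Gamma(x) = \sqrt{x}\,(1 + O(1/x))$ as $x \to \infty$. Applied with $x = (d-1)/2$, this gives $\Gamma(d/2)/\Gamma((d-1)/2) \sim \sqrt{(d-1)/2}$, hence
\[
  \frac{\Gamma(d/2)}{\Gamma((d-1)/2)\sqrt{\pi d}} \;\longrightarrow\; \frac{1}{\sqrt{2\pi}}.
\]
Multiplying the two limits gives $h_d(u) \to \frac{1}{\sqrt{2\pi}} e^{-u^2/2}$ for every $u \in \mathbb{R}$.

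Finally, since $h_d$ and the standard normal density are both probability densities on $\mathbb{R}$ and $h_d$ converges to it pointwise, Scheffé's lemma upgrades this to convergence in total variation, which in particular implies pointwise convergence of the cumulative distribution functions, i.e., $\lim_{d\to\infty} \mathbb{P}(\tau_d \le \lambda/\sqrt{d}) = \Phi(\lambda)$ for all $\lambda$, as required. The only mildly delicate step is the gamma-ratio asymptotic; everything else is a direct limit computation. One could alternatively avoid Scheffé by integrating $h_d$ over $(-\infty, \lambda]$ and applying dominated convergence (a Gaussian dominator works once $d$ is large, since $(1 - u^2/d)^{(d-3)/2} \le e^{-u^2(d-3)/(2d)}$ for $|u| < \sqrt{d}$), but Scheffé is cleaner.
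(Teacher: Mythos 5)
Your proposal is correct and follows essentially the same route as the paper's proof: compute the density of $\tau_d\sqrt{d}$, show it converges pointwise to the standard normal density via the gamma-ratio asymptotic $\Gamma(\frac{d}{2})/\Gamma(\frac{d-1}{2}) \sim \sqrt{d/2}$ together with the limit $(1-u^2/d)^{(d-3)/2} \to e^{-u^2/2}$, and then invoke Scheff\'e's theorem to upgrade this to convergence in distribution. The only differences are cosmetic (how the exponent is split and how the gamma asymptotic is stated), so there is nothing to change.
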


For $d \geq 1000$ or so the discrepancy is already extremely small: we simulated 1M samples from the correct cosine distribution with $d=1000$, and the strong Anderson-Darling test failed to reject the null hypothesis of Gaussianity at even the low significance level of 0.15.\footnote{To our knowledge, there is no way of confidently inferring that a set of samples comes from a given distribution, or even that they come from a distribution that is close to the given distribution in some metric. However it gives us some confidence to see that a strong goodness-of-fit test cannot rule out the given distribution even with a very high number of samples. The low significance level means the test is so sensitive it would be expected to reject even a set of truly Gaussian-distributed set of samples 15\% of the time. This is a more quantitative claim than visually comparing a histogram or empirical CDF, as is commonly done.} Therefore, we propose to use $\mathcal{N}(0, 1/d)$ as the null hypothesis distribution when dimensionality is high enough.

Finally, one might ask: what justifies fitting a Gaussian $\norm(\hat{\mu}, \hat{\sigma}^2)$ to the samples for the alternate distribution? Empirical results, including visual comparison of the histogram to a Gaussian distribution function, and the failure of Anderson-Darling tests to reject the null hypothesis of Gaussianity, indicate that the empirical distribution of the cosine statistics in Algorithm \ref{alg:gaussian-sum} are approximately Gaussian in high dimensions. However it is not immediately clear how to prove this. Note that it would not be sufficient to show that each cosine sample $g_i$ is marginally Gaussian, which would be relatively easy. If we proved only that much, it could for example be the case that all of the samples $g_i$ are always identical to each other on each run (but Gaussian distributed across runs) so that the empirical variance on any run is zero. Then our method would be broken. We would need a stronger statement about their joint distribution, such as that the joint distribution converges to an isotropic Gaussian. While we believe this to be the case, we emphasize that it is not essential that the cosine statistics $g_i$ actually \emph{be} independently Gaussian distributed in order for Algorithm \ref{alg:gaussian-sum} to be correct. \Cref{thm:alg-correct} proves that under this choice, as $d$ becomes large, Algorithm \ref{alg:gaussian-sum} outputs an estimate $\hat{\varepsilon}$ that is asymptotically equal to the correct $\varepsilon$ of the mechanism we are auditing.

\begin{restatable}{theorem}{algcorrect} \label{thm:alg-correct}
For $d \in \mathbb{N}$, for some $m$ (which may grow with $d$), let $x_{d1} \ldots x_{dm}$ such that $\| X_d \| = o\!\left(\!\sqrt{d}\right)$ where $X_d = \sum_{j=1}^m x_{dj}$. Let $k = o(d)$, but $k = \omega(1)$, and for $i\in[k]$, let $c_{di}$ be sampled i.i.d.\ uniformly from $\mathbb{S}^{d-1}$. Let $Z_d \sim \mathcal{N}(0;I_d)$. For $\sigma \in \R^+$, define the mechanism result $\rho_d=X_d + \sum_{i=1}^k c_{di} + \sigma Z_d$, and the cosine values $g_{di} = \frac{\langle c_{di} , \rho_d \rangle}{\|\rho_d\|}$. Write the empirical mean of the cosines $\hat{\mu}_d = \frac{1}{k}\sum_{i=1}^k g_{di}$, and the empirical variance $\hat{\sigma}^2_d = \frac{1}{k}\sum_{i=1}^k \bigl( g_{di} - \hat{\mu}_d \bigr)^2$. Let $\hat{\varepsilon}_d = \varepsilon(\norm(0, 1/d)\ ||\ \norm(\hat{\mu}_d, \hat{\sigma}^2_d) ; \delta)$. Then $\hat{\varepsilon}_d$ converges in probability to the constant $\varepsilon(\mathcal{N}(0, \sigma^2) \| \mathcal{N}(1, \sigma^2); \delta)$.
\end{restatable}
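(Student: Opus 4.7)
The plan is to reduce the claim to a law-of-large-numbers argument by showing that, appropriately rescaled, the sample mean and variance of the cosines $g_{di}$ converge in probability to the limiting parameters $1/\sigma$ and $1$ of the alternate test-statistic distribution, and then to invoke the scale-invariance of the Gaussian $\varepsilon$ computation together with a continuity argument. Concretely, I will prove $\sqrt{d}\,\hat{\mu}_d \arrowp 1/\sigma$ and $d\,\hat{\sigma}_d^2 \arrowp 1$; since $\varepsilon(\norm(0,s^2)\,\|\,\norm(m,t^2);\delta)$ is invariant under jointly rescaling both measures by any positive constant, this lets me rewrite $\hat{\varepsilon}_d = \varepsilon(\norm(0,1)\,\|\,\norm(\sqrt{d}\,\hat{\mu}_d,\, d\,\hat{\sigma}_d^2);\delta)$ and finish by the continuous mapping theorem.

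The first step is a decomposition. Condition on $Z_d$ and set $\rho_d^Z := X_d + \sigma Z_d$; from $\|X_d\| = o(\sqrt{d})$ and standard chi-squared concentration of $\|Z_d\|^2$ around $d$, I obtain $\|\rho_d^Z\| = \sigma\sqrt{d}\,(1+o_p(1))$. Since $\|\sum_{i=1}^k c_{di}\|^2 = k + \sum_{i\ne j}\langle c_{di},c_{dj}\rangle = k\,(1+o_p(1))$ and $k=o(d)$, this perturbation is negligible, so $\|\rho_d\| = \sigma\sqrt{d}\,(1+o_p(1))$ as well. Expanding
\[
\langle c_{di},\rho_d\rangle = \langle c_{di},\rho_d^Z\rangle + 1 + \sum_{j\ne i}\langle c_{di},c_{dj}\rangle,
\]
I set $a_i := \langle c_{di},\rho_d^Z\rangle/\|\rho_d^Z\|$. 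By Proposition~\ref{thm:cos-exact}, \emph{conditional on $Z_d$} the $a_i$ are i.i.d.\ draws from the exact cosine law, so $\Exp[a_i]=0$, $\Var(a_i)=1/d$, and $\Exp[a_i^4]=O(1/d^2)$. The cross-term $b_i := \sum_{j\ne i}\langle c_{di},c_{dj}\rangle$ has mean zero and $\Var(b_i)=(k-1)/d=o(1)$. Dividing by $\|\rho_d\|$ and collecting error terms yields
\[
\sqrt{d}\,g_{di} = \sqrt{d}\,a_i + \tfrac{1}{\sigma} + \xi_{di},
\]
where $\xi_{di}$ collects $b_i/\sigma$, the relative error from replacing $\|\rho_d\|$ by $\sigma\sqrt{d}$, and lower-order effects.

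Now I apply a conditional weak law. Given $Z_d$ and hence independence of the $a_i$, $(1/k)\sum_i \sqrt{d}\,a_i$ has mean $0$ and variance $1/k\to 0$, giving $\sqrt{d}\,\hat{\mu}_d \arrowp 1/\sigma$; similarly $(1/k)\sum_i d\,a_i^2$ has mean $1$ and variance $O(1/k)\to 0$ (using $\Exp[a_i^4]=O(1/d^2)$), which, combined with $d\,\hat{\mu}_d^2 \arrowp 1/\sigma^2$, yields $d\,\hat{\sigma}_d^2 = (d/k)\sum_i g_{di}^2 - d\,\hat{\mu}_d^2 \arrowp 1$. The technical content of this step is to verify that $\xi_{di}$ contributes nothing to these averages: $(1/k)\sum_i \xi_{di} \arrowp 0$ (direct from $\Var(b_i)=o(1)$ and the scalar prefactors being $o_p(1)$), $(1/k)\sum_i \xi_{di}^2 \arrowp 0$ (by bounding $\xi_{di}^2 \le 3[(b_i/\sigma)^2 + \eta_d^2/\sigma^2 + d\,a_i^2\,\eta_d^2]$ with $\eta_d = o_p(1)$ and averaging each piece), and the mixed term $(1/k)\sum_i \sqrt{d}\,a_i\,\xi_{di}\arrowp 0$ by Cauchy--Schwarz.

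Finally, $\varepsilon(\norm(0,1)\,\|\,\norm(m,t^2);\delta)$ is continuous in $(m,t^2)$ at $(1/\sigma,1)$ — this follows from the explicit privacy-loss-distribution characterization of Appendix~\ref{sec:compute-epsilon}, which depends continuously on the Gaussian parameters for $t>0$ — so the continuous mapping theorem gives $\hat{\varepsilon}_d \arrowp \varepsilon(\norm(0,1)\,\|\,\norm(1/\sigma,1);\delta) = \varepsilon(\norm(0,\sigma^2)\,\|\,\norm(1,\sigma^2);\delta)$, exactly the $\varepsilon$ of the unit-sensitivity Gaussian mechanism. The main obstacle I anticipate is the second-moment concentration: because the $g_{di}$ share the denominator $\|\rho_d\|$ and are coupled through $\langle c_{di},c_{dj}\rangle$, they are not jointly independent, so bounding $\Var\!\bigl((1/k)\sum_i d\,g_{di}^2\bigr)$ requires careful accounting of cross-covariances; fortunately the off-diagonal contributions carry an extra factor of $1/d$ relative to the diagonal, which makes them asymptotically negligible and lets the conditional i.i.d.\ structure in the $a_i$ carry the argument through.
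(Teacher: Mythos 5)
Your proposal is correct and shares the paper's overall skeleton — prove $\sqrt{d}\,\hat{\mu}_d \arrowp 1/\sigma$ and $d\,\hat{\sigma}_d^2 \arrowp 1$, use scale invariance to rewrite $\hat{\varepsilon}_d = \varepsilon(\norm(0,1)\,\|\,\norm(\sqrt{d}\hat{\mu}_d, d\hat{\sigma}_d^2);\delta)$, and finish with the continuous mapping (Mann--Wald) theorem — but the technical core is handled by a genuinely different route. The paper works unconditionally with $\langle c_i,\rho\rangle$: it computes means and variances of $\|\rho\|^2/d$, $\frac{1}{k}\sum_i\langle c_i,\rho\rangle$ and $\frac{1}{k}\sum_i\langle c_i,\rho\rangle^2$ directly, and the hardest step, concentration of the second moment, is settled by expanding the fourth moment over all terms $A,B,C,D \in \{X, c_1,\ldots,c_k,\sigma Z\}$, factorizing via \Cref{lem:exp-pair-dot}, and isolating the single non-factorizing case $A=B=c_j$, $C=D=c_i$. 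You instead condition on $Z_d$ (equivalently, exploit that the $a_i = \langle c_{di},\rho_d^Z\rangle/\|\rho_d^Z\|$ are exactly i.i.d.\ with the cosine law of \Cref{thm:cos-exact}, independent of $Z_d$ by spherical symmetry), so the law of large numbers applies cleanly to $\frac{1}{k}\sum_i \sqrt{d}\,a_i$ and $\frac{1}{k}\sum_i d\,a_i^2$ using only the second and fourth moments of a single cosine, while all coupling through the shared denominator and the cross-inner-products $\langle c_{di},c_{dj}\rangle$ is pushed into explicit error terms $\xi_{di}$ controlled in $L^2$ plus Cauchy--Schwarz. Your route buys modularity and avoids the combinatorial fourth-moment bookkeeping; the paper's buys a self-contained unconditional computation that never needs the decomposition of $\rho$ into signal-plus-canaries. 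Two small items you would need to make explicit in a full write-up, both routine: the moment bound $\Exp[a_i^4]=O(1/d^2)$ (e.g.\ from $\tau_d^2 \sim \mathrm{Beta}(\tfrac12,\tfrac{d-1}{2})$), and the continuity of $(m,t^2)\mapsto \varepsilon(\norm(0,1)\,\|\,\norm(m,t^2);\delta)$ at $(1/\sigma,1)$ — a point you rightly flag and which the paper leaves implicit in its appeal to Mann--Wald.
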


Running the algorithm with moderate values of $d$ and $k$ already yields a close approximation. We simulated the algorithm for $d$ ranging from $10^4$ to $10^7$, always setting $k=\sqrt{d}$. The results are shown in Table~\ref{tab:gauss}. The mean estimate is always very close to the true value of $\varepsilon$. There is more noise when $d$ is small, but our method is primarily designed for ``normal'' scale contemporary deep-learning models which easily surpass 1M parameters, and at these sizes the method is very accurate.

\begin{table}[h]
    \centering
    \begin{tabular}{|c|c|c|c|c|c|}
    \hline
         $\sigma$ & analytical $\varepsilon$ & $d=10^4$  & $d=10^5$  & $d=10^6$  & $d=10^7$ \\
         \hline
         0.541 & 10.0 & $9.89 \pm 0.71$ & $10.1 \pm 0.41$ & $10.0 \pm 0.23$ & $10.0 \pm 0.10$ \\
         1.54 & 3.0 & $3.00 \pm 0.46$ & $3.00 \pm 0.31$ & $2.96 \pm 0.15$ & $3.00 \pm 0.08$ \\
         4.22 & 1.0 & $0.98 \pm 0.41$ & $1.05 \pm 0.23$ & $0.99 \pm 0.14$ & $1.00 \pm 0.07$ \\
    \hline
    \end{tabular}
    \caption{One-shot auditing of the Gaussian mechanism for a range of values of $d$, setting $k=\sqrt{d}$ and $\delta=10^{-6}$. For each value of $\varepsilon$, we set $\sigma$ using the optimal calibration of \citet{balle2018improving}. Shown is the mean and std $\varepsilon_\text{est}$ over 50 simulations.}
    \label{tab:gauss}
\end{table}

\section{One-shot privacy estimation for FL with random canaries}
\label{sec:method}

We now extend this idea to DP Federated Averaging to estimate the privacy of releasing the final model parameters in one shot, during model training. We propose adding $k$ canary clients to the training population who participate exactly as real clients do. Each canary client generates a random model update sampled from the unit sphere, which it returns at every round in which it participates, scaled to have norm equal to the clipping norm for the round. After training, we collect the set of canary/final-model cosines, fit them to a Gaussian, and compare them to the null hypothesis distribution $\mathcal{N}(0, 1/d)$ just as we did for the basic Gaussian mechanism. The procedure is described in Algorithm \ref{alg:final}.

The rationale behind this attack is as follows. FL is an optimization procedure in which each client contributes an update at each round in which it participates, and each model iterate is a linear combination of all updates received thus far, plus Gaussian noise. Our threat model allows the attacker to control the updates of a client when it participates, and the ability to inspect the final model. We would argue that a powerful (perhaps optimal, under some assumptions) strategy is to return a very large update that is essentially orthogonal to all other updates, and then measure the dot product (or cosine) to the final model. Fortunately, the attacker does not even need any information about the other clients' updates in order to find such an orthogonal update: if the dimensionality of the model is high relative to the number of clients, randomly sampled canary updates are nearly orthogonal to all the true client updates and also to each other.

Unlike many works that only produce correct estimates when clients are sampled uniformly and independently at each round, our method makes no assumptions on the pattern of client participation. The argument of the preceding paragraph holds whether clients are sampled uniformly at each round, shuffled and processed in batches, or even if they are sampled according to the difficult-to-characterize \emph{de facto} pattern of participation of real users in a production system. Our only assumption is that canaries can be inserted according to the same distribution that real clients are. In production settings, a simple and effective strategy would be to designate a small fraction of real clients to have their model updates replaced with the canary update whenever they participate. If the participation pattern is such that memorization is easier, for whatever reason, the distribution of canary/final-model cosines will have a higher mean, leading to higher $\varepsilon$ estimates.

\begin{algorithm}[t!]
  \caption{Privacy estimation via random canaries} \label{alg:final}
\begin{multicols}{2}
\begin{algorithmic}[1]
    \State \textbf{Input:} Client selection function $\mathbf{clients}$, client training functions $\tau_j$, canary selection function $\mathbf{canaries}$, set of canary updates $c_i$, number of rounds $T$, initial parameters $\theta_0$, noise generator $Z$, $\ell_2$ clip norm function $S$, privacy parameter $\delta$, server learning rate $\eta$
    \For{$t = 1, \ldots, T$}
        \State $\rho = \vec{0}$
        \For{$j \in \mathbf{clients}(t)$}
            \State $\rho \leftarrow \rho + \Call{Clip}{\tau_j(\theta_{t-1}); S(t)}$
        \EndFor
        \For{$i \in \mathbf{canaries}(t)$}
            \State $\rho \leftarrow \rho + \Call{Proj}{c_i; S(t)}$
        \EndFor
        \State $n = |\mathbf{clients}(t)| + |\mathbf{canaries}(t)|$
        \State $\theta_t \leftarrow \theta_{t-1} + \eta (\rho + Z(t)) / n$
    \EndFor
    \For{\textbf{all canaries} $i$}
        \State $g_i \leftarrow \langle c_i, \theta_T \rangle / \|\theta_T\|$
    \EndFor
    \State $\mu, \sigma \leftarrow \mathbf{mean}(\{g_i\}), \mathbf{std}(\{g_i\})$
    \State $\varepsilon \leftarrow \varepsilon(\norm(0, 1/d)\ ||\ \norm(\mu, \sigma^2) ; \delta)$
\vspace{1.5ex}
\Function{Clip}{$x; \kappa$}
    \State \textbf{return} $x \cdot \min(1, \kappa/\|x\|)$
\EndFunction
\vspace{1.5ex}
\Function{Proj}{$x; \kappa$}
    \State \textbf{return} $x \cdot \kappa/\|x\|$
\EndFunction
\end{algorithmic}
\end{multicols}
\end{algorithm}

The task of the adversary is to distinguish between canaries that were inserted during training vs.\ canaries that were not observed during training, based on observation of the cosine of the angle between the canary and the final model. If the canary was not inserted during training, we know by the argument in the preceding section that the distribution of the cosine will follow $\mathcal{N}(0, 1/d)$. To model the distribution of observed canary cosines, we approximate the distribution with a Gaussian with the same empirical mean $\hat{\mu}$ and variance $\hat{\sigma}^2$. Then we report the $\varepsilon$ computed by comparing two Gaussian distributions as described in Appendix \ref{sec:compute-epsilon}.

We stress that our empirical $\varepsilon$ estimate should not be construed as a formal bound on the worst-case privacy leakage. Rather, a low value of $\varepsilon$ can be taken as evidence that an adversary implementing this particular, powerful attack will have a hard time inferring the presence of any given user upon observing the final model. If we suppose that the attack is strong, or even optimal, then we can infer that \emph{any} attacker will not be able to perform MI successfully, and therefore our $\varepsilon$ is a justifiable metric of the true privacy when the final model is released. Investigating conditions under which this could be proven would be a valuable direction for future work.

Existing analytical bounds on $\varepsilon$ for DP-SGD assume that all intermediate model updates can be observed by the adversary \citep{abadi16deep}. In cross-device FL, an attacker who controls multiple participating devices could in principle obtain some or all model checkpoints. But we argue there are at least two important cases where the “final-model-only” threat model is realistic. First, one can simulate DP-FedAvg on centrally collected data in the datacenter to provide a user-level DP guarantee. In this scenario, clients still entrust the server with their data, but intermediate states are ephemeral and only the final privatized model (whether via release of model parameters or black-box access) is made public. Second, there is much recent interest in using Trusted Execution Environments (TEEs) for further data minimization. For example, using TEEs on server and client, a client could prove to the server that they are performing local training as intended without being able to access the model parameters \citep{mo2021ppfl}. Therefore we believe the final-model-only threat model is realistic and important, and will be of increasing interest in coming years as TEEs become more widely used.

Aside from quantifying the privacy of releasing only the final model, our method allows us to explore how privacy properties are affected by varying aspects of training for which we have no tight formal analysis. As an important example (which we explore in experiments) we consider how the estimate changes if clients are constrained to participate a fixed number of times.

We also propose a simple extension to our method that allows us to estimate $\varepsilon$ under the threat model where all model updates are observed. We use as the test statistic the \emph{maximum over rounds} of the angle between the canary and the model delta at that round. A sudden increase of the angle cosine at a particular round is good evidence that the canary was present in that round. Unfortunately in this case we can no longer express in closed form the distribution of max-over-rounds cosine of an canary that did not participate in training, because it depends on the trajectory of partially trained models, which is task and model specific. Our solution is to sample a set of unobserved canaries that are never included in model updates, but we still keep track of their cosines with each model delta and finally take the max. We approximate both the distributions of observed and unobserved maximum canary/model-delta cosines using Gaussian distributions and compute the optimal $\varepsilon$. The pseudocode for this modified procedure is provided in Algorithm~\ref{alg:all}. We will see that this method provides estimates of $\varepsilon$ close to the analytical bounds under moderate amounts of noise, providing evidence that our attack is strong.

\begin{algorithm}[t!]
  \caption{Privacy estimation via random canaries using all iterates} \label{alg:all}
\begin{multicols}{2}
\begin{algorithmic}[1]
    \State \textbf{Input:} As in Algorithm~\ref{alg:final}, but with \emph{unobserved} canary updates $c_i^0$ and \emph{observed} canary updates $c_i^1$. 
    \For{$t = 1, \ldots, T$}
        \State $\rho = \vec{0}$
        \For{$j \in \mathbf{clients}(t)$}
            \State $\rho \leftarrow \rho + \Call{Clip}{\tau_j(\theta_{t-1}); S(t)}$
        \EndFor
        \For{$i \in \mathbf{canaries}(t)$}
            \State $\rho \leftarrow \rho + \Call{Proj}{c^1_i; S(t)}$
        \EndFor
        \State $n = |\mathbf{clients}(t)| + |\mathbf{canaries}(t)|$
        \State $\bar{\rho} \leftarrow (\rho + Z(t)) / n$
        \For{\textbf{all canaries} $i$}
            \State $g_{t,i}^0 = \langle c^0_i, \bar{\rho} \rangle / \|\bar{\rho}\|$
            \State $g_{t,i}^1 = \langle c^1_i, \bar{\rho} \rangle / \|\bar{\rho}\|$
        \EndFor
        \State $\theta_t \leftarrow \theta_{t-1} + \eta \bar{\rho}$
    \EndFor
    \For{\textbf{all canaries} $i$}
        \State $g^0_i \leftarrow \max_t g_{t,i}^0$
        \State $g^1_i \leftarrow \max_t g_{t,i}^1$
    \EndFor
    \State $\mu_0, \sigma_0 \leftarrow \mathbf{mean}(\{g^0_i\}), \mathbf{std}(\{g^0_i\})$
    \State $\mu_1, \sigma_1 \leftarrow \mathbf{mean}(\{g^1_i\}), \mathbf{std}(\{g^1_i\})$
    \State $\varepsilon \leftarrow \varepsilon(\norm(\mu_0, \sigma_0^2)\ ||\ \norm(\mu_1, \sigma_1^2) ; \delta)$
\end{algorithmic}
\end{multicols}
\end{algorithm}

\section{Experiments} \label{sec:experiments}

In this section we present the results of experiments estimating the privacy leakage while training a model on a large-scale public federated learning dataset: the stackoverflow word prediction data/model of \citet{reddi2020adaptive}.\footnote{Code to reproduce experiments is available at \url{https://github.com/google-research/federated/tree/master/one_shot_epe}.} The model is a word-based LSTM with 4.1M parameters. We train the model for 2048 rounds with 167 clients per round, where each of the $m$=341k clients participates in exactly one round, amounting to a single epoch over the data. We use the adaptive clipping method of \citet{andrew2021differentially}. With preliminary manual tuning, we selected a client learning rate of $1.0$, server learning rate of $0.56$, and momentum of $0.9$ on the server for all experiments because this choice gives good performance over a range of levels of DP noise. We use 1k canaries for each set of cosines; experiments with intermediate iterates use 1k observed and 1k unobserved canaries. We fix $\delta = m^{-1.1}$. We consider noise multipliers\footnote{The noise multiplier is the ratio of the noise to the clip norm. When adaptive clipping is used, the clip norm varies across rounds, and the noise scales proportionally.} in the range 0.0496 to 0.2317, corresponding to analytical $\varepsilon$ estimates from 300 down to 30.\footnote{This bound comes from assuming the adversary knows on which round a target user participated. It is equal to the privacy loss of the unamplified Gaussian mechanism applied once with no composition. Stronger guarantees might come from assuming shuffling of the clients~\citep{feldman2023stronger,feldman2022hiding}, but tight bounds for that case are not known.} We also include experiments with clipping only.

One might worry that the adding random canary clients could impact model utility. In our experiments on Stackoverflow, with 341k real clients, the addition of 1000 canary clients had a negligible impact on model accuracy -- at most causing a 0.1\% absolute decrease. The change in performance is shown in Table~\ref{tab:acc}. We note that many practical FL problems have even more clients than this: see for example \citet{xu2023federated} which trains language models using DP-FL in a variety of languages, almost of all of which have more than 3M clients each -- ten times more than in our experiments.

\begin{table}
    \centering
    \begin{tabular}{|c|c|c|c|}
    \hline
         \makecell{noise \\ multiplier} & analytical $\varepsilon$ & \makecell{baseline \\ accuracy} & \makecell{accuracy \\ 1k canaries \\ added} \\
         \hline
         0 & $\infty$ & 25.3\% & 25.3\% \\
         0.064 & 194 & 24.0\% & 23.9\% \\
         0.102 & 93.8 & 23.1\% & 23.1\% \\
         0.184 & 40.2 & 21.5\% & 21.5\% \\
         0.234 & 28.9 & 20.6\% & 20.5\% \\
    \hline
    \end{tabular}
    \caption{Comparison of accuracy of word prediction models trained with and without the presence of 1000 random canary clients. Inserting 1000 random clients among the 341k real clients in the Stackoverflow word prediction task has an almost negligible effect on model performance.}
    \label{tab:acc}
\end{table}

We also report a high-probability lower bound on $\varepsilon$ that comes from applying a modified version of the method of \citet{Audit-DPSGD} to the set of cosines. That work uses Clopper-Pearson upper bounds on the achievable FPR and FNR of a thresholding classifier to derive a bound on $\varepsilon$. We make two changes: following \citet{pmlr-v202-zanella-beguelin23a}, we use the tighter and more centered Jeffreys confidence interval for the upper bound on FNR at some threshold $a$, and we use the exact CDF of the null distribution for the FPR as described in Section~\ref{sec:method}. We refer to this lower bound as $\varepsilon_\mathrm{lo}$. We set $\alpha = 0.05$ to get a 95\%-confidence bound. This estimate has a maximum possible value of 6.24 with perfect separation. We include it to illustrate that our method does not suffer from this limitation, and to show that our estimate is not falsified by the lower bound.

We first consider the case where the intermediate updates are released as described in Algorithm~\ref{alg:all}. The middle columns of Table~\ref{tab:all_deltas} shows the results of these experiments over a range of noise multipliers. For the lower noise multipliers, our method easily separates the cosines of observed vs.\ unobserved canaries, producing very high estimates $\varepsilon_\text{est}$-all, which are much higher than lower bounds $\varepsilon_\text{lo}$-all estimated by previous work. This confirms our intuition that intermediate model updates give the adversary significant power to detect the presence of individuals in the data. It also provides evidence that the canary cosine attack is strong, increasing our confidence that the $\varepsilon$-final estimate using only the final model is not a severe underestimate.

\begin{table}
\centering
\begin{tabular}{|l|c||c|c||c|c|}
\hline
Noise & analytical $\varepsilon$ & $\varepsilon_{\text{lo}}$-all & $\varepsilon_{\text{est}}$-all & $\varepsilon_{\text{lo}}$-final & $\varepsilon_{\text{est}}$-final \\
\hline
0 & $\infty$ & 6.240 & 45800 & 2.88 & 4.60 \\
0.0496 & 300 & 6.238 & 382 & 1.11 & 1.97 \\
0.0986 & 100 & 5.05 & 89.4 & 0.688 & 1.18 \\
0.2317 & 30 & 0.407 & 2.693 & 0.311 & 0.569 \\
\hline
\end{tabular}
\caption{Comparing $\varepsilon$ estimates of Stackoverflow models using all model deltas vs.\ using the final model only. $\varepsilon_{\text{lo}}$ is the empirical 95\% lower bound from our modified \citet{Audit-DPSGD} method. For moderate noise, $\varepsilon_\mathrm{est}$-all is in the ballpark of the analytical $\varepsilon$, providing evidence that the attack is strong and therefore the $\varepsilon$ estimates are reliable. On the other hand, $\varepsilon_\text{est}$-final is far lower, indicating that when the final model is observed, privacy is better.}
\label{tab:all_deltas}
\end{table}


The rightmost columns of Table~\ref{tab:all_deltas} show the results of restricting the adversary to observe only the final model, as described in Algorithm~\ref{alg:final}. Now $\varepsilon_\text{est}$ is significantly smaller than when the adversary has access to all intermediary updates. With clipping only, our estimate is $4.60$, which is still quite weak from a rigorous privacy perspective.\footnote{An $\varepsilon$ of 5 means that an attacker can go from a small suspicion that a user participated (say, 10\%) to a very high degree of certainty (94\%) \citep{desfontaines2018differential}.} But with even a small amount of noise, we approach the high-privacy regime of $\varepsilon \sim 1$, confirming observations of practitioners that a small amount of noise is sufficient to prevent most memorization.

\subsection{Experiments on EMNIST}
Results on the EMNIST character recognition dataset support the same conclusions. EMNIST contains 814k characters written by 3383 users. The model is a CNN with 1.2M parameters. The users are shuffled and we train for five epochs with 34 clients per round. The optimizers on client and server are both SGD, with learning rates 0.031 and 1.0 respectively, and momentum of 0.9 on the server. The client batch size is 16.

Table \ref{tab:emnist} shows the empirical epsilon estimates using either all model iterates or only the final model. As with Stackoverflow next word prediction, using all iterates and a low amount of noise gives us estimates close to the analytical upper bound, while using only the final model gives a much smaller estimate.

\begin{table}
\centering
\begin{tabular}{|l|c||c|c||c|c|}
\hline
Noise & analytical $\varepsilon$ & $\varepsilon_{\text{lo}}$-all & $\varepsilon_{\text{est}}$-all & $\varepsilon_{\text{lo}}$-final & $\varepsilon_{\text{est}}$-final \\
\hline
0.0 & $\infty$ & 6.25 & 48300 & 3.86 & 5.72 \\
0.16 & 33.8 & 2.87 & 17.9 & 1.01 & 1.20 \\
0.18 & 28.1 & 2.32 & 12.0 & 0.788 & 1.15 \\
0.195 & 24.8 & 2.02 & 8.88 & 0.723 & 1.08 \\
0.25 & 16.9 & 0.896 & 3.86 & 0.550 & 0.818 \\
0.315 & 12.0 & 0.315 & 1.50 & 0.216 & 0.737 \\
\hline
\end{tabular}
\caption{Comparing $\varepsilon$ estimates of EMNIST models using all model deltas vs.\ using the final model only. $\varepsilon_{\text{lo}}$ is the empirical 95\% lower bound from our modified Jagielski et al.\ [2020] method. The high values of $\varepsilon_\mathrm{est}$-all indicate that membership inference is easy when the attacker has access to all iterates. On the other hand, when only the final model is observed, $\varepsilon_\text{est}$-final is far lower.}
\label{tab:emnist}
\end{table}

\subsection{Experiments with multiple canary presentations} \label{sec:multiple-canaries}
Here we highlight the ability of our method to estimate privacy when we vary not only the threat model, but also aspects of the training algorithm that may reasonably be expected to change privacy properties, but for which no tight analysis has been obtained. We consider presenting each canary a fixed multiple number of times, modeling the scenario in which clients are only allowed to check in for training every so often. In practice, a client need not participate in every period, but to obtain worst-case estimates, we present the canary in every period.

\begin{figure*}
\centering
\includegraphics[scale=0.293]{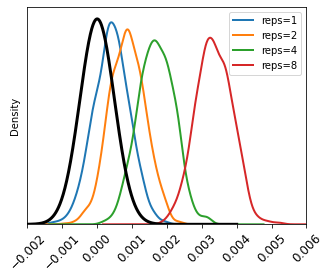}
\centering
\includegraphics[scale=0.293]{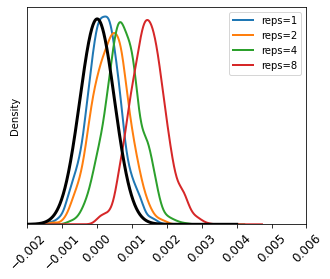}
\centering
\includegraphics[scale=0.293]{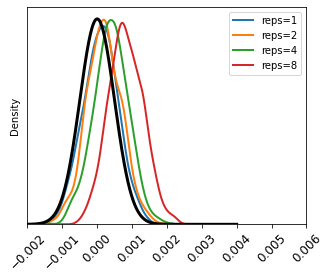}
\centering
\includegraphics[scale=0.293]{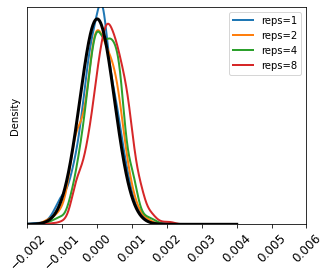}
\caption{Density plots of cosine values of Stackoverflow models with four values of noise corresponding to analytical epsilons ($\infty$, 300, 100, 30) and four values of canary repetitions (1, 2, 4, 8). The black curve in each plot is the pdf of the null distribution $\norm(0, 1/d)$. With no noise ($\varepsilon=\infty$), the distributions are easily separable, with increasing separation for more canary repetitions. At higher levels of noise, distributions are less separable, even with several repetitions.}
\label{fig:densities}
\end{figure*}

\begin{figure*}
\centering
\includegraphics[scale=0.24]{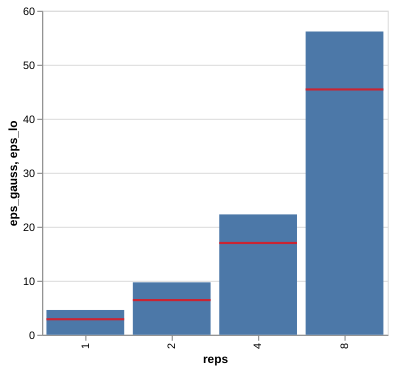}
\includegraphics[scale=0.24]{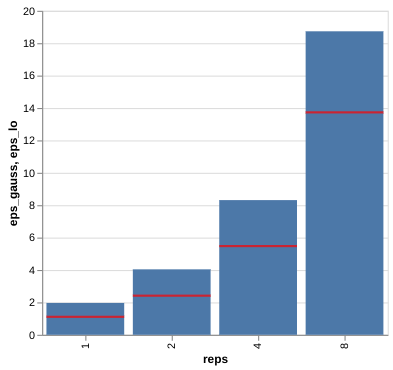}
\includegraphics[scale=0.24]{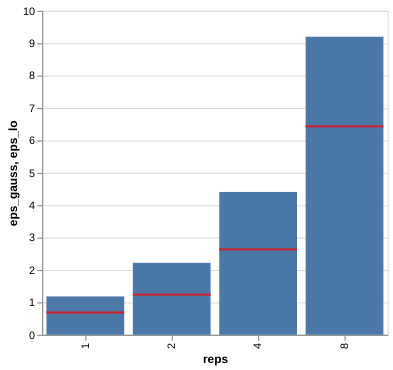}
\includegraphics[scale=0.24]{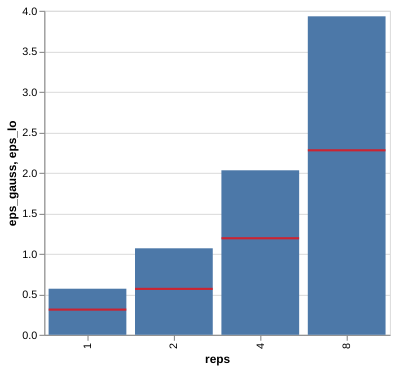}
\caption{Comparing Stackoverflow models with different numbers of canary repetitions. Blue bars are our $\varepsilon_\text{est}$ and red ticks are the $\varepsilon_\text{lo}$ 95\%-confidence lower bound for four values of noise corresponding to analytical epsilons ($\infty$, 300, 100, 30) and four values of canary repetitions (1, 2, 4, 8). Note the difference of y-axis scales in each plot. Our estimate of epsilon increases sharply with the number of canary repetitions, confirming that limiting client participation improves privacy.}
\label{fig:canary_reps}
\end{figure*}

In Figure~\ref{fig:densities} we show kernel density estimation plots of the canary cosine sets. As the number of presentations increases in each plot, the distributions become more and more clearly separated. On the other hand as the amount of noise increases across the three plots, they converge to the null distribution. Also visible on this figure is that the distributions are roughly Gaussian-shaped, justifying the Gaussian approximation that is used in our estimation method. In Appendix \ref{sec:cos-gauss} we give quantitative evidence for this observation. Finally we compare $\varepsilon_\text{lo}$ to our $\varepsilon_\text{est}$ with multiple canary presentations in Figure~\ref{fig:canary_reps}. For each noise level, $\varepsilon_\text{est}$ increases dramatically with increasing presentations, confirming our intuition that seeing examples multiple times dramatically reduces privacy.

Figure \ref{fig:emnist} demonstrates the effect of increasing the number of canary repetitions for EMNIST. The results are qualitatively similar to the case of Stackoverflow.

\begin{figure*}
\centering
\includegraphics[scale=0.24]{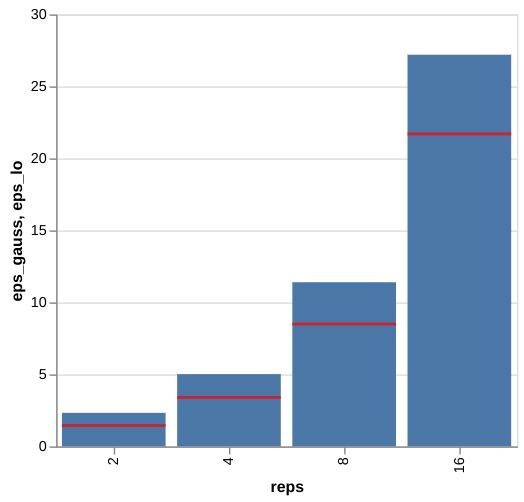}
\includegraphics[scale=0.24]{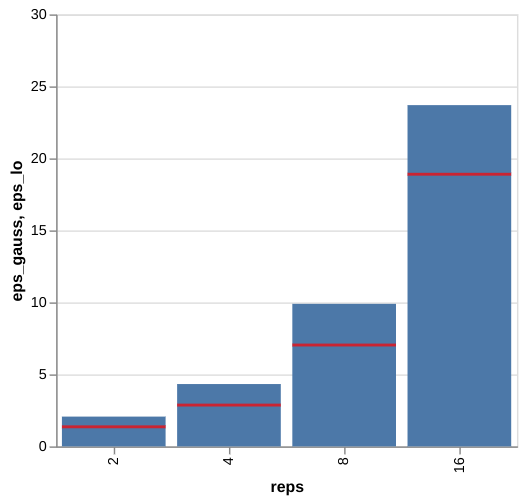}
\includegraphics[scale=0.24]{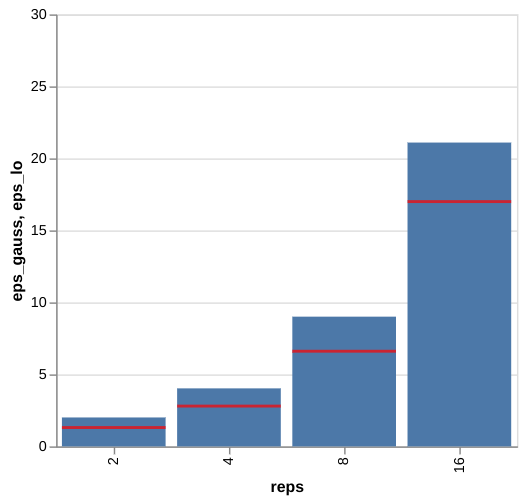}
\caption{Comparing EMNIST models with different numbers of canary repetitions. Blue bars are our $\varepsilon_\text{est}$ and red ticks are the $\varepsilon_\text{lo}$ 95\%-confidence lower bound for three noise multipliers (0.16, 0.18, 0.195) and four numbers of canary repetitions. Our estimate of epsilon increases sharply with the number of canary repetitions, confirming that limiting client participation improves privacy.}
\label{fig:emnist}
\end{figure*}

\subsection{Empirical comparison with CANIFE}
\label{sec:canife}
As discussed in section~\ref{sec:related}, our method is significantly more general than the CANIFE method of \citet{maddock2022canife}. CANIFE periodically audits individual rounds to get a per-round $\hat{\varepsilon}_r$, estimates the noise for the round $\hat{\sigma}_r$ by inverting the computation of $\varepsilon$ for the Gaussian mechanism, and uses standard composition theorems to determine a final cumulative epsilon. Therefore the estimate will be inaccurate if the assumptions of those composition theorems do not strictly hold, for example, if clients are not sampled uniformly and independently at each round, or the noise is not in fact isotropic Gaussian and independent across rounds. In contrast, our method will detect if any unanticipated aspect of training causes some canaries to have more influence on the final model in terms of higher canary/model dot product, leading in turn to higher $\varepsilon$ estimates. Also CANIFE's method of crafting canaries is model/dataset specific, and computationally expensive.

It is still interesting to see how the methods compare in the limited setting where CANIFE's assumptions do hold. Unfortunately a comparison of two ``estimates'' is not straightforward when there is no solid ground truth. In this case, we do not have any way to determine the ``true'' $\varepsilon$, even inefficiently, because doing so would require designing a provably optimal attack. However, we can use a strong attack to determine a lower bound on $\varepsilon$ directly from Eq. \eqref{eq:eps-from-fpr-fnr}. If the lower bound exceeds CANIFE's $\varepsilon$, then we know it is an underestimate.

We trained a two-layer feedforward network on the fashion MNIST dataset. Following experiments in \citet{maddock2022canife}, we used a canary design pool size of 512, took 2500 canary optimization steps to find a canary example optimizing pixels and soft label, ran auditing every 100 rounds with 100 attack scores on each auditing round. We trained with a clip norm of 1.0 and noise multiplier of 0.2 for one epoch with a batch size of 128, which corresponds to an analytical $\varepsilon$ of 34.5.

To compute a lower bound, we trained with 1000 inserted gradient canaries sampled from the unit sphere. We computed attack statistics by taking the maximum over steps of the cosine of the canary to the model delta for the step, just as in Algorithm \ref{alg:all}. Only instead of fitting them to a Gaussian to produce an $\varepsilon$ estimate, we compute a lower bound on $\varepsilon$ by bounding the FPR and FNR of the attack as in Jagielski et al. Following \citet{pmlr-v202-zanella-beguelin23a} we use the tighter and more centered Jeffreys confidence interval.

The results are shown in Table \ref{tab:CANIFE}. CANIFE's $\varepsilon$ of 0.88 is somewhat lower than the lower bound of 1.82, while ours is significantly higher, at 6.8. We still cannot rule out that our method is not overestimating $\varepsilon$, but at least we can say that it is in the plausible (but wide) range $[1.82, 34.5]$ while CANIFE's is not.

There are several reasons why CANIFE might be underestimating $\varepsilon$. Although the authors bill the method as a “measurement” of $\varepsilon$, not a lower bound, nevertheless, it uses a lower bound on the per-round $\varepsilon_r$ to estimate the per-round $\sigma_r$. Thus, it has a built-in bias toward lower $\varepsilon$, and there is a maximum $\varepsilon$ it cannot exceed simply due to the finite sample size. CANIFE also searches for a canary example whose gradient is nearly orthogonal to a held-out set of data (not necessarily to the actual batch being audited). Our method uses a random gradient which is provably nearly orthogonal to the other gradients in the batch with high probability, leading to a stronger attack.

\begin{table}
    \centering
    \begin{tabular}{|c|c|c|c|}
    \hline
    CANIFE & $\varepsilon_\text{lo}$ & ours & analytical $\varepsilon$ \\
    \hline
    $0.88 \pm 0.12$ & $1.82 \pm 0.46$ & $6.8 \pm 1.1$ & 34.5 \\
    \hline
    \end{tabular}
    \caption{Mean and standard deviation over 50 runs of three $\varepsilon$ estimates, and analytical $\varepsilon$ bound.}
    \label{tab:CANIFE}
\end{table}

\section{Conclusion}

In this work we have introduced a novel method for empirically estimating the privacy loss during training of a model with DP-FedAvg. For natural production-sized problems (millions of parameters, hundreds of thousands of clients), it produces reasonable privacy estimates during the same single training run used to estimate model parameters, without significantly degrading the utility of the model, and does not require any prior knowledge of the task, data or model. The resulting  $\varepsilon_\text{est}$ can be interpreted as bounding the degree of confidence that a particular strong adversary could have in performing membership inference. It gives a reasonable metric for comparing how privacy loss changes between arbitrary variants of client-participation, or other variations of DP-FedAvg for which no method for producing a tight analytical estimate of $\varepsilon$ is known.

In future work we would like to explore how our metric is related to formal bounds on the privacy loss. We would also like to open the door to empirical refutation of our epsilon metric -- that is, for researchers to attempt to design a successful attack on a training mechanism for which our metric nevertheless estimates a low value of epsilon. To the extent that we are confident no such attack exists, we can be assured that our estimate is faithful. We note that this is the case with techniques like the cryptographic hash function SHA-3: although no proof exists that inverting SHA-3 is computationally difficult, it is used in high-security applications because highly motivated security experts have not been able to mount a successful inversion attack in the many years of its existence.

\section{Acknowledgements}

We are extremely grateful to Krishna Pillutla for his detailed review of an early draft of the paper with many helpful suggestions for improvement. We also thank the anonymous ICLR reviewers who helped us to make our proofs more rigorous.


\newpage

\bibliography{iclr2024andrew,refs/privacy_refs}

\appendix 

\section{Algorithm for exact computation of \texorpdfstring{$\varepsilon$}{epsilon} comparing two Gaussian distributions}
\label{sec:compute-epsilon}

In this section we give the details of the computation for estimating $\varepsilon$ when $A(D)$ and $A(D')$ are both Gaussian-distributed with different variances. An implementation of this computation is available at \url{https://github.com/google-research/federated/tree/master/one_shot_epe/empirical_privacy_estimation_lib.py}.

Let the distribution under $A(D)$ be $P_1 =  \mathcal{N}(\mu_1, \sigma^2_1)$ and the distribution under $A(D')$ be $P_2 = \mathcal{N}(\mu_2, \sigma^2_2)$ with densities $p_1$ and $p_2$ respectively. Define $f_{P_1||P_2}(x) = \log \frac{p_1(x)}{p_2(x)}.$ Now $Z_1 = f_{P_1||P_2}(X_1)$ with $X_1 \sim P_1$ is the privacy loss random variable. Symmetrically define $Z_2 = f_{P_2||P_1}(X_2)$ with $X_2 \sim P_2$.

From \citet{steinke2022composition} Prop. 7 we have that $(\epsilon, \delta)$-DP implies \[ \Pr \left[ Z_1 > \varepsilon \right] - e^\varepsilon \Pr \left[ -Z_2 > \varepsilon \right] \leq \delta \text{ and } \Pr \left[ Z_2 > \varepsilon \right] - e^\varepsilon \Pr \left[ -Z_1 > \varepsilon \right] \leq \delta. \]

Now we can compute
\begin{align*}
f_{P_1||P_2}(x) &= \log \frac{p_1(x)}{p_2(x)} \\
&= \log \left( \frac{\sigma_2}{\sigma_1} \exp \left( - \frac{1}{2} \left[ \frac{(x - \mu_1)^2}{\sigma_1^2} - \frac{(x-\mu_2)^2}{\sigma_2^2} \right] \right) \right) \\
&= \log \sigma_2 - \log \sigma_1 + \frac{(x - \mu_2)^2}{2 \sigma_2^2} - \frac{(x - \mu_1)^2}{2 \sigma_1^2} \\
&= a x^2 + b x + c
\end{align*}
where
\begin{align*}
a &= \frac{1}{2}\left(\frac{1}{\sigma_2^2} - \frac{1}{\sigma_1^2}\right), \\
b &= \frac{\mu_1}{\sigma_1^2} - \frac{\mu_2}{\sigma_2^2}, \\
\text{and } c &= \frac{1}{2} \left( \left(\frac{\mu_2}{\sigma_2}\right)^2 - \left( \frac{\mu_1}{\sigma_1} \right)^2 \right) + \log \sigma_2 - \log \sigma_1 .
\end{align*}

To compute $\Pr \left[ Z_1 > \varepsilon \right]$, we need $\Pr \left[ a X_1^2 + bX_1 + (c - \varepsilon) > 0 \right]$ with $X_1 \sim P_1$. To do so, divide the range of $X_1$ into intervals according to the zeros of $R(x) = ax^2 + bx + (c-\varepsilon)$. For example, if $R$ has roots $r_1 < r_2$ and $a$ is positive, we can compute $\Pr \left[ Z_1 > \varepsilon \right] = \Pr \left[X_1 < r_1 \right] + \Pr \left[ X_1 > r_2 \right]$, using the CDF of the Normal distribution. This requires considering a few cases, depending on the sign of $a$ and the sign of the determinant $b^2 - 4 a (c - \varepsilon)$. 
Now note that $f_{P_2||P_1} = -f_{P_1||P_2}$, so \[ \Pr \left[ -Z_2 > \varepsilon \right] = \Pr \left[ -f_{P_2||P_1}(X_2) > \varepsilon \right] = \Pr[a X_2^2 + bX_2 + (c - \varepsilon) > 0]. \] So the two events we are interested in ($Z_1 > \varepsilon$ and $-Z_2 > \varepsilon$) are the same, only when we compute their probabilities according to $P_1$ vs. $P_2$ we use different values for $\mu$ and $\sigma$.

For numerical stability, the probabilities should be computed in the log domain. So we get
\begin{align*}
\log \delta &\geq \log \left( \Pr[Z_1 > \varepsilon] - e^\varepsilon \Pr[-Z_2 > \varepsilon] \right) \\
&= \log \Pr[Z_1 > \varepsilon] + \log \left(1 - \exp (\varepsilon + \log \Pr[-Z_2 > \varepsilon] - \log \Pr[Z_1 > \varepsilon] ) \right).
\end{align*}

Note it can happen that $\Pr[Z_1 > \varepsilon] < e^\varepsilon \Pr[-Z_2 > \varepsilon]$ in which case the corresponding bound is invalid. A final trick we suggest for numerical stability is if $X \sim \mathcal{N}(\mu, \sigma^2)$ to use $\Pr(X < \mu; t, \sigma^2)$ in place of $\Pr(X > t; \mu, \sigma^2)$.

Now to determine $\varepsilon$ at a given target $\delta$, one can perform a line search over $\varepsilon$ to find the value that matches.

\section{Proofs of results from the main text}\label{sec:proofs}

\paragraph{Preliminaries.} First we must introduce an elementary result about the measure of a hyperspherical cap. In $\R^d$, the area ($(d-1)$-measure) of the hypersphere of radius $r$ is \citep{li2011concise} \[ A_d(r) = \frac{2 \pi^{d/2}}{\Gamma(d/2)}r^{d-1}. \] Define the unit hyperspherical cap of maximal angle $\theta \in [0, \pi ]$ to be the set $\{x: \|x\| = 1\text{ and } x_1 \geq \cos \theta \}$, and let $M_d(\theta)$ denote its area. From \citep{li2011concise} we have \[ M_d(\theta) = \int_0^\theta\! A_{d-1}(\sin \phi)\, d \phi, \] from which it follows from the fundamental theorem of calculus that
\begin{equation} \label{eq:sphere-cap}
\frac{d}{d \theta} M_d(\theta) = A_{d-1}(\sin \theta) = \frac{2 \pi^{\frac{d-1}{2}}}{\Gamma(\frac{d-1}{2})}\sin^{d-2} \theta.
\end{equation}

\cosexact*

\begin{proof}
Due to spherical symmetry, without loss of generality, we can take $v$ to be constant and equal to the first standard basis vector $e_1$. First we describe the distribution of the angle $\theta \in [0, \pi]$ between $c$ and $e_1$, then change variables to get the distribution of its cosine $\tau_d$. Using \eqref{eq:sphere-cap} and normalizing by the total area of the sphere $A_d(1)$, the density of the angle is
\begin{align*}
\phi_d(\theta) &= \left(A_d(1) \right)^{-1} \frac{d}{d \theta} M_d(\theta) \\
&= \left( \frac{2 \pi^\frac{d}{2}}{\Gamma(\frac{d}{2})} \right)^{-1} \frac{2 \pi^\frac{d-1}{2}}{\Gamma(\frac{d-1}{2})} \sin^{d-2} \theta \\
&= \frac{\Gamma(\frac{d}{2})}{\Gamma(\frac{d-1}{2}) \sqrt{\pi}} \sin^{d-2} \theta.
\end{align*}
Now change variables to find the density of the angle cosine $\tau_d = \cos(\theta) \in [-1, 1]$:
\begin{align*}
f_d(t) &= \phi_d( \arccos t) \cdot \left| \frac{d}{dt} \arccos(t) \right| \\
&= \frac{\Gamma(\frac{d}{2})}{\Gamma(\frac{d-1}{2}) \sqrt{\pi}} \bigl[ \sin( \arccos t) \bigr]^{d-2} \left| - \frac{1}{\sqrt{1 - t^2}} \right| \\
&= \frac{\Gamma(\frac{d}{2})}{\Gamma(\frac{d-1}{2}) \sqrt{\pi}} \frac{\left(\sqrt{1 - t^2} \right)^{d-2}}{\sqrt{1 - t^2}} \\
&= \frac{\Gamma(\frac{d}{2})}{\Gamma(\frac{d-1}{2}) \sqrt{\pi}} (1 - t^2)^\frac{d-3}{2}.
\end{align*}
\end{proof}

\cosapprox*
\begin{proof}
The probability density function of $\tau_d \sqrt{d}$ is
\[
\hat{f}_d(t) = \begin{cases} \frac{\Gamma(\frac{d}{2})}{\Gamma(\frac{d-1}{2}) \sqrt{\pi d}} \left(1-t^2/d \right)^\frac{d-3}{2} & \text{for $|t| \leq \sqrt{d}$;} \\ 0 & \text{for $|t| \geq \sqrt{d}$.} \end{cases}
\]
For any $t$, for large enough $d$, $t \leq \sqrt{d}$, so
\begin{align*}
\lim_{d \rightarrow \infty} \hat{f}_d(t) &= \biggl( \lim_{d \rightarrow \infty} \frac{\Gamma(\frac{d}{2})}{\Gamma(\frac{d-1}{2}) \sqrt{\pi d}} \biggr) \cdot \biggl( \lim_{d \rightarrow \infty} \left(1-t^2/d\right)^\frac{d}{2} \biggr) \cdot \biggl( \lim_{d \rightarrow \infty} \left(1-t^2/d \right)^{-\frac{3}{2}} \biggr) \\
&= \frac{1}{\sqrt{2 \pi}} \cdot e^{-t^2 / 2} \cdot 1,
\end{align*}
where we have used the fact that $\frac{\Gamma(\frac{d}{2})}{\Gamma(\frac{d - 1}{2})} \sim \sqrt{d/2}$. The result follows by Scheffe's theorem, which states that pointwise convergence of the density function implies convergence in distribution \citep{scheffe1947useful}.
\end{proof}

\begin{lemma} \label{lem:cos-var}
If $\tau_d$ is distributed according to the cosine angle distribution described in~\Cref{thm:cos-exact}, then $\VAR{\tau_d} = 1/d$.
\end{lemma}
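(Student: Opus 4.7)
The plan is to bypass integrating $t^2 f_d(t)$ directly and instead exploit the geometric interpretation of $\tau_d$ established in the proof of \Cref{thm:cos-exact}. By spherical symmetry, we may identify $\tau_d$ in distribution with $\langle c, e_1 \rangle = c_1$, where $c = (c_1, \ldots, c_d)$ is sampled uniformly from $\mathbb{S}^{d-1}$.

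The first step is to observe that the density $f_d(t) = \frac{\Gamma(d/2)}{\Gamma((d-1)/2)\sqrt{\pi}} (1-t^2)^{(d-3)/2}$ is an even function of $t$ on $[-1, 1]$, so $\Exp[\tau_d] = 0$ and therefore $\Var[\tau_d] = \Exp[\tau_d^2] = \Exp[c_1^2]$. The second and crucial step is to use the coordinate symmetry of the uniform distribution on $\mathbb{S}^{d-1}$: the joint law of $(c_1, \ldots, c_d)$ is invariant under permutations of the coordinates, so $\Exp[c_i^2]$ does not depend on $i$. Since $\|c\|^2 = 1$ almost surely, summing gives
\begin{equation*}
1 = \Exp\!\left[\sum_{i=1}^d c_i^2\right] = d \cdot \Exp[c_1^2],
\end{equation*}
hence $\Exp[c_1^2] = 1/d$, which is the claim.

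I expect no real obstacle here; the proof is essentially two lines once one notices that the variance is exactly $\Exp[c_1^2]$ for a uniformly random unit vector. The only subtlety is reminding the reader that the representation $\tau_d \stackrel{d}{=} c_1$ follows from the spherical symmetry argument used in the proof of \Cref{thm:cos-exact} (the distribution of $\langle c, v\rangle/\|v\|$ does not depend on the choice of nonzero $v$, so we may take $v = e_1$). As a sanity check one can confirm via the Beta-integral $\int_{-1}^{1} t^2 (1-t^2)^{(d-3)/2} dt = B(3/2,(d-1)/2)$ and the identity $\Gamma(3/2) = \sqrt{\pi}/2$ that the direct calculation yields the same value $1/d$, but the symmetry argument is preferable for brevity and because it also confirms consistency of the density derivation in \Cref{thm:cos-exact}.
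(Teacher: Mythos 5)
Your proof is correct and is essentially identical to the paper's: both identify $\tau_d$ in distribution with the first coordinate $c_1$ of a uniform point on $\mathbb{S}^{d-1}$, observe the mean is zero, and then use the coordinate-exchange symmetry together with $\sum_i c_i^2 = 1$ to conclude $\Exp[c_1^2] = 1/d$. The Beta-integral sanity check is a nice extra touch but not part of the paper's argument.
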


\begin{proof}
Let $c = (c_1, \dots, c_d)$ be uniform on $\mathbb{S}^{d-1}$. Then $c_1 = \langle c, e_1 \rangle$ has the required distribution. $\EXP{c_1}$ is zero, so we are interested in $\VAR{c_1} = \EXP{c_1^2}$. Since $\sum_i c_i^2 = 1$, we have that $\EXP{\sum_i c_i^2} = \sum_i \EXP{c_i^2} = 1$. But all of the $c_i$ have the same distribution, so $\EXP{c_1^2} = 1/d$.
\end{proof}

\begin{lemma} \label{lem:exp-pair-dot}
If $c$ is sampled uniformly from $\mathbb{S}^{d-1}$, then for any $x, y \in \R^d,$ we have that $\EXP{\langle c, x \rangle \langle c, y \rangle} = \langle x, y \rangle / d$.
\end{lemma}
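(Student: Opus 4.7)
The plan is to expand the product of inner products into a double sum over coordinates, then exploit the symmetry of the uniform distribution on $\mathbb{S}^{d-1}$ to annihilate the cross terms, leaving only the diagonal contribution which is controlled by Lemma~\ref{lem:cos-var}.

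Concretely, first I would write $\langle c, x\rangle \langle c, y\rangle = \sum_{i,j} x_i y_j\, c_i c_j$ and take expectation by linearity, reducing the claim to a computation of the $d\times d$ matrix of second moments $M_{ij} = \EXP{c_i c_j}$. For the off-diagonal entries ($i \neq j$), I would argue that the map $c \mapsto c'$ defined by $c'_i = -c_i$ and $c'_k = c_k$ for $k \neq i$ is a measure-preserving involution of $\mathbb{S}^{d-1}$ (it is an orthogonal transformation, and the uniform measure on the sphere is rotationally invariant). Under this map, $c_i c_j \mapsto -c_i c_j$, so $M_{ij} = -M_{ij}$ and hence $M_{ij}=0$. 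For the diagonal entries, Lemma~\ref{lem:cos-var} applied to the cosine of the angle between $c$ and the standard basis vector $e_i$ gives $\EXP{c_i^2} = 1/d$. Substituting back yields $\EXP{\langle c,x\rangle \langle c,y\rangle} = \sum_i x_i y_i / d = \langle x,y\rangle / d$.

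There really is no obstacle here: the result is a routine second-moment computation, and the symmetry argument for the off-diagonal vanishing is the only step requiring any thought. An equivalent slicker phrasing I might use instead is to write $\EXP{\langle c,x\rangle \langle c,y\rangle} = x^\top \EXP{c c^\top} y$, observe that $\EXP{cc^\top}$ commutes with every orthogonal matrix by rotational invariance of the uniform measure on the sphere, hence must be a scalar multiple of the identity, and pin down the scalar by taking the trace: $\operatorname{tr}\EXP{cc^\top} = \EXP{\|c\|^2} = 1$, so $\EXP{cc^\top} = I/d$.
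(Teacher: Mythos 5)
Your argument is exactly the paper's proof: expand the product into coordinate second moments, kill the off-diagonal terms $\EXP{c_i c_j}$ by the sign-flip symmetry, and use $\EXP{c_i^2}=1/d$ from Lemma~\ref{lem:cos-var} for the diagonal. Both your main version and your alternative $\EXP{cc^\top}=I/d$ phrasing are correct; no gaps.
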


\begin{proof}
Let $c = (c_1, \ldots, c_d)$. From the proof of \Cref{lem:cos-var} we have that for all $i$, $\EXP{c_i^2} = 1/d$. It is also easy to see that for all $i \neq j$, $\EXP{c_i c_j} = 0$, since by symmetry $\EXP{c_i c_j} = \EXP{(-c_i) c_j} = -\EXP{c_i c_j}$. Therefore,
\[
\EXP{\langle c, x \rangle \langle c, y \rangle} = \sum_{i,j} x_i y_j \EXP{c_i c_j} = \sum_i \frac{x_i y_i}{d} = \frac{\langle x, y \rangle}{d}.
\]
\end{proof}

\algcorrect*

\begin{proof}
We will show that as $d \rightarrow \infty$, we have that $\sqrt{d}\hat{\mu}_d \arrowp 1 / \sigma$ and $d\hat{\sigma}_d^2 \arrowp 1$. Note that $\hat{\varepsilon}_d = \varepsilon(\mathcal{N}(0, 1) \| \mathcal{N}(\sqrt{d} \hat{\mu}_d, d \hat{\sigma}^2_d); \delta)$, since this is just a rescaling of both distributions by a factor of $1 / \sqrt{d}$, which does not change $\varepsilon$. Therefore by the Mann-Wald theorem~\citep{mann1943stochastic}, the estimate $\hat{\varepsilon}_d$ converges in probability to $\varepsilon (\mathcal{N}(0, 1)\ ||\ \mathcal{N}(1 / \sigma, 1); \delta)$. Now these two distributions are just a scaling of $A(D) \sim \mathcal{N}(0, \sigma^2)$ and $A(D') \sim \mathcal{N}(1, \sigma^2)$ by a factor of $1 / \sigma$. This proves our claim.

For the remainder of the proof, we will omit the subscript $d$ on $X_d$, $c_{di}$, $Z_d$, and $\rho_d$. All summations, such as $\sum_i c_i$, should be understood as going from 1 to $k$.

Rewrite
\[
\sqrt{d} \hat{\mu}_d = \sqrt{d} \Biggl( \frac{1}{k} \sum_i \frac{\langle c_i, \rho \rangle}{||\rho||} \Biggr) = \left( \frac{||\rho||}{\sqrt{d}} \right)^{-1} \left( \frac{1}{k} \sum_i \langle c_i, \rho \rangle \right).
\]
We will show that $\frac{||\rho||^2}{d} \arrowp \sigma^2$, while $\frac{1}{k} \sum_i \langle c_i, \rho \rangle \arrowp 1$.

\newcommand{\smallvar}[1]{\text{Var} \bigl[#1\bigr]}

We will need to consider the following dot products: $\langle X, c_i \rangle$ (for all $i$), $\langle X, Z \rangle$, $\langle c_i, Z \rangle$ (for all $i$), $\langle c_i, c_j \rangle$ (for all $i < j$), and $\langle Z, Z \rangle = \| Z \|^2$. All but $\| Z\|^2$ are zero-mean, and they are pairwise uncorrelated.\footnote{To see that $\langle X, Z \rangle$ and $\|Z\|^2$ are uncorrelated, write $\EXP{\langle X, Z \rangle \|Z \|^2} = \|X\| \cdot \EXP{\langle e_1, Z \rangle \| Z\|^2} = \|X\| \left( \EXP{Z_1^3} + \sum_{i=2}^d \EXP{Z_1 Z_i^2} \right) = 0 = \EXP{\langle X, Z \rangle}$, since the $Z_i$ are independent draws from $\mathcal{N}(0, 1)$.} It follows that $\langle c_i, \rho \rangle$ and $\langle c_j, \rho \rangle$ are uncorrelated for all $i \neq j$. Note that $\langle c_i, Z \rangle \sim \mathcal{N}(0, 1)$, and $\langle c_i, c_j \rangle$ is distributed like $\tau_d$ (and $\langle X, c_i \rangle$ is distributed like $\|X\| \tau_d$). Finally, note that $\langle c_i, Z \rangle^2$ is Chi-squared distributed with a single degree of freedom while $\langle Z, Z \rangle = ||Z||^2$ is Chi-squared distributed with $d$ degrees of freedom.

Now we can proceed,
\begin{align*}
\EXP{\frac{||\rho||^2}{d}} &= \frac{1}{d} \Exp \biggl[ \bigg\langle X + \sum_i c_i + \sigma Z,\ X + \sum_i c_i + \sigma Z \bigg\rangle \biggr] \\
&= \frac{1}{d} \left( \|X\|^2 + k + \sigma^2 d \right) \\
&= \sigma^2 + o(1).
\end{align*}
The variance decomposes:
\begin{align*}
\VAR{\frac{||\rho||^2}{d}} &= \frac{1}{d^2} \Var\biggl[ \biggl\langle X + \sum_i c_i + \sigma Z,\ X + \sum_i c_i + \sigma Z \biggr\rangle \biggr] \\
&= \frac{1}{d^2} \left(
\begin{array}{c}
\sum_i \smallvar{2 \langle X, c_i \rangle} + \VAR{2 \langle X, \sigma Z \rangle } + \sum_{i < j} \smallvar{2 \langle c_i, c_j \rangle} \\[1ex]
+\, \sum_i \smallvar{2 \langle c_i, \sigma Z \rangle} + \VAR{\langle \sigma Z, \sigma Z \rangle }
\end{array} \right) \\
&= \frac{1}{d^2} \left( 4k \frac{\|X\|^2}{d} + 4\sigma^2 \|X\|^2 + \frac{2k(k-1)}{d} + 4 k \sigma^2 + 2 \sigma^4 d \right), \\
&= o(1).
\end{align*}
Taken together, these imply that $\frac{||\rho||}{\sqrt{d}} \overset{p}{\longrightarrow} \sigma$.

Now reusing many of the same calculations,
\begin{align*}
\Exp \Biggl[ \frac{1}{k} \sum_i \langle c_i, \rho \rangle \Biggr] &= \frac{1}{k} \sum_i \Exp \Biggl[ \langle c_i, X + \sum_j c_j + \sigma Z \rangle \Biggr] \\
&= \frac{1}{k} \Biggl( \sum_i \EXP{ \langle c_i, X \rangle} + \sum_{i, j} \EXP{\langle c_i, c_j \rangle} + \sigma \sum_i \EXP{\langle c_i, Z \rangle} \Biggr) \\
&= \frac{1}{k} \left( 0 + k + 0 \right) = 1,
\end{align*}
and
\begin{align*}
\VAR{\frac{1}{k} \sum_i \langle c_i, \rho \rangle} &= \frac{1}{k^2} \sum_i \Biggl( \smallvar{ \langle c_i, X \rangle} + \sum_j \smallvar{\langle c_i, c_j \rangle} + \sigma^2 \smallvar{\langle c_i, Z \rangle} \Biggr) \\
&= \frac{1}{k} \left( \frac{\|X\|^2}{d} + \frac{k-1}{d} + \sigma^2 \right) \\
&= o(1),
\end{align*}
which together imply that $\frac{1}{k} \sum_i \langle c_i, \rho \rangle \arrowp 1$.

Now consider
\begin{align*}
d\hat{\sigma}_d^2 &= d \left( \frac{1}{k} \sum_i g_i^2 - \biggl( \frac{1}{k} \sum_i g_i \biggr)^2 \right) \\
&= \frac{d}{||\rho||^2} \left( \frac{1}{k} \sum_i \langle c_i, \rho \rangle^2 - \biggl( \frac{1}{k} \sum_i \langle c_i, \rho \rangle \biggr)^2 \right).
\end{align*}
We already have that $\frac{d}{||\rho||^2} \arrowp \frac{1}{\sigma^2}$ and $\frac{1}{k} \sum_i \langle c_i, \rho \rangle \arrowp 1$, so to demonstrate that $d\hat{\sigma}_d^2 \arrowp 1$, it will be sufficient to show $\frac{1}{k} \sum_i \langle c_i, \rho \rangle^2 \arrowp 1 + \sigma^2$. To that end:
\begin{align*}
\Exp \Biggl[ \frac{1}{k} \sum_i \langle c_i, \rho \rangle^2 \Biggr] &= \frac{1}{k} \sum_i \Exp \Biggl[ \biggl( \langle c_i, X \rangle + \sum_j \langle c_i, c_j \rangle + \langle c_i, \sigma Z \rangle \biggr)^2 \Biggr] \\
&= \frac{1}{k} \sum_i \biggl( \EXP{\langle c_i, X \rangle^2} + \sum_j \EXP{\langle c_i, c_j \rangle^2} + \EXP{\langle c_i, \sigma Z \rangle^2} \biggr) \\
&= \frac{1}{k} \sum_i \left( \frac{\|X\|^2}{d} + \left(1 + \frac{k-1}{d} \right) + \sigma^2 \right) \\
&= 1 + \sigma^2 + o(1).
\end{align*}

Finally, to show that $\Var \left( \frac{1}{k} \sum_i \langle c_i, \rho \rangle^2 \right) = o(1)$, we will show that \[\Exp \bigg[  {\Big( \frac{1}{k} \sum_i \langle c_i, \rho \rangle^2 \Big)^2}\bigg] = \Exp \bigg[ \frac{1}{k}\sum_i \langle c_i, \rho \rangle^2 \bigg]^2 + o(1). \]
Rewrite
\begin{align*}
\Exp \bigg[  {\Big( \frac{1}{k} \sum_i \langle c_i, \rho \rangle^2 \Big)^2}\bigg] &= \frac{1}{k^2} \sum_{i,j} \EXP{\langle c_i, \rho \rangle^2 \langle c_j, \rho \rangle^2 } \\
&= \frac{1}{k^2} \sum_{i,j} \sum_{A,B,C,D} \EXP{\langle c_i, A \rangle \langle c_i, B \rangle \langle c_j, C \rangle \langle c_j, D \rangle }
\end{align*}
where $A, B, C, D$ range independently over the terms of $\rho$: $\{ X, c_1, \ldots, c_k, \sigma Z\}$. Now if $\{ c_i, c_j \} \cap \{A, B, C, D\} = \emptyset$, then defining $V_{-ij} = \{c_1, \ldots, c_k, Z\} \setminus \{c_i, c_j\}$ and using \Cref{lem:exp-pair-dot}:
\begin{align*}
\EXP{\langle c_i, A \rangle \langle c_i, B \rangle \langle c_j, C \rangle \langle c_j, D \rangle} &= \EXP{ \EXP{\langle c_i, A \rangle \langle c_i, B \rangle \langle c_j, C \rangle \langle c_j, D \rangle \middle| V_{-ij}}} \\
&= \EXP{\EXP{\langle c_i, A \rangle \langle c_i, B \rangle \middle| V_{-ij}} \EXP{\langle c_j, C \rangle \langle c_j, D \rangle \middle| V_{-ij}}} \\
&= \EXP{\langle A, B \rangle \langle C, D \rangle} / d^2\\
&= \EXP{\langle A, B \rangle} \EXP{\langle C, D \rangle} / d^2 \\
&= \EXP{\langle c_i, A \rangle \langle c_i, B \rangle} \EXP{\langle c_j, C \rangle \langle c_j, D \rangle}.
\end{align*}
If we allow $|\{c_i, c_j\} \cap \{A, B, C, D\}| > 0$, there is only one case where the decomposition does not hold: if $A = B = c_j$ and $C = D = c_i$ (and $i \neq j$) we have $\EXP{\langle c_i, c_j \rangle^4} \sim 3/d^2$ (considering the fourth central moment of the standard Gaussian that $\sqrt{d} \langle c_i, c_j \rangle$ converges to by \Cref{thm:cos-approx}) whereas $\EXP{\langle c_i, c_j\rangle^2}^2 = 1/d^2$. Now we can conclude:
\begin{align*}
\Exp\bigg[  {\Big( \frac{1}{k} \sum_i \langle c_i, \rho \rangle^2 \Big)^2} \bigg] &\sim \frac{1}{k^2} \sum_{i,j} \sum_{A,B,C,D} \EXP{\langle c_i, A \rangle \langle c_i, B \rangle} \EXP{\langle c_j, C \rangle \langle c_j, D \rangle} + \frac{k-1}{k}\frac{2}{d^2} \\
&= \Big( \frac{1}{k} \sum_i \sum_{A,B} \EXP{\langle c_i, A \rangle \langle c_i, B \rangle} \Big)^2 + o(1) \\
&= \EXP{\frac{1}{k}\sum_i \langle c_i, \rho \rangle^2}^2 + o(1).
\end{align*}
\end{proof}

\paragraph{Note on the rate of growth of \texorpdfstring{$X_d$}{Xd} in \Cref{thm:alg-correct}.}
\Cref{thm:alg-correct} requires that $\|X_d\| = o(\sqrt{d})$. This follows from any of several natural sufficient conditions:
\begin{itemize}
    \item the data $x_{dj}$ are bounded and $m=o(\sqrt{d})$,
    \item the data $x_{dj}$ are i.i.d.\ from some isotropic distribution with finite variance and $m=o(d)$,
    \item there exists some constant $C \in \R^+$ with $\|X_d\|_\infty \leq C$.
\end{itemize}
This last condition is reasonable if the $x_{dj}$ are updates during some well-behaved learning process.

\section{Gaussianity of cosine statistics from experiments} \label{sec:cos-gauss}
The density plots of the cosine statistics from the experiments in section~\ref{sec:multiple-canaries}, shown in Figure~\ref{fig:densities}, appear Gaussian-shaped. To our knowledge, there is no way of confidently inferring that a set of samples comes from a given distribution, or even that they come from a distribution that is close to the given distribution in some metric. To quantify the error of our approximation of the cosine statistics with a Gaussian distribution, we apply the Anderson-Darling test to each set of cosines in Table~\ref{tab:anderson} \citep{anderson1952asymptotic}. It gives us some confidence to see that a strong goodness-of-fit test cannot rule out that the distributions are Gaussian. This is a more quantitative claim than visually comparing the density.

\begin{table}
\centering
\begin{tabular}{|l||c|c|c|c|c|}
\hline
Noise & 1 rep & 2 reps & 3 reps & 5 reps & 10 reps \\
\hline
0 & 0.464 & \textbf{0.590} & 0.396 & 0.407 & 0.196 \\
0.1023 & \textbf{0.976} & 0.422 & 0.340 & 0.432 & 0.116 \\
0.2344 & 0.326 & 0.157 & 0.347 & \textbf{0.951} & 0.401 \\
\hline
\end{tabular}
\caption{Anderson-Darling test statistics for each set of canary-cosine samples from the experiments in section~\ref{sec:multiple-canaries}. The test rejects at a 1\% significance level if the statistic is greater than 1.088, and rejects at 15\% significance if the statistic is greater than 0.574. If all 15 (independent) distributions were Gaussian, the probability of observing three or more values with test statistic greater than 0.574 would be 68\%, so we cannot reject the null hypothesis that they are, indeed, Gaussian.}
\label{tab:anderson}
\end{table}

\section{Experiments comparing when multiple runs are used}

In the limit of high model dimensionality, canaries are essentially mutually orthogonal, and therefore they will interfere minimally with each other's cosines to the model. In this section we give evidence that even in the range of model dimensionalities explored in our experiments, including many canaries in one run does not significantly perturb the estimated epsilon values. Ideally we would train 1000 models each with one canary to collect a set of truly independent statistics. However this is infeasible, particularly if we want to perform the entire process multiple times to obtain confidence intervals. Instead, we reduce the number of canaries per run by a factor of ten and train ten independent models to collect a total of 1000 canary cosine statistics from which to estimate $\varepsilon$. We repeated the experiment 50 times for two different noise multipliers, which still amounts to training a total of 1000 models. (Ten runs, two settings, fifty repetitions.)

\begin{figure*}
\centering
\includegraphics[scale=0.43]{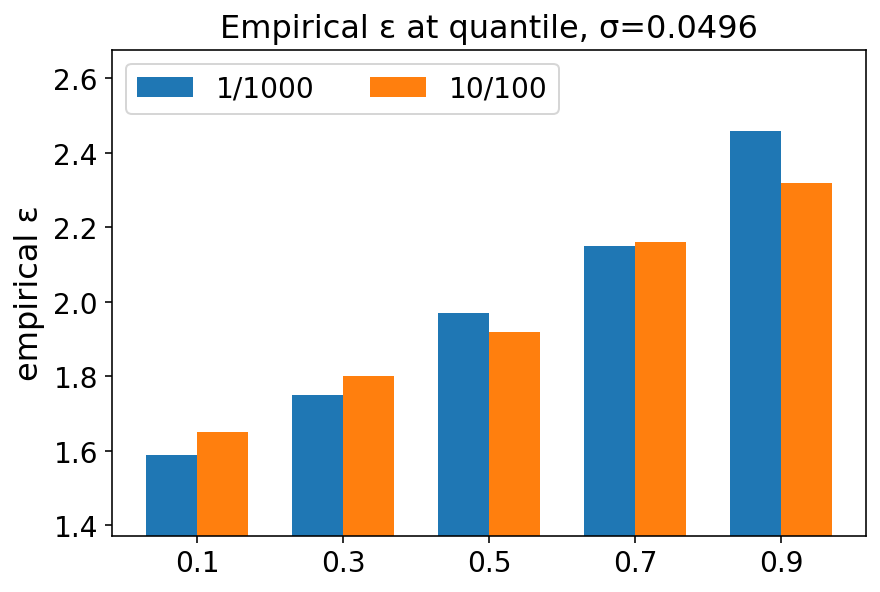}
\includegraphics[scale=0.43]{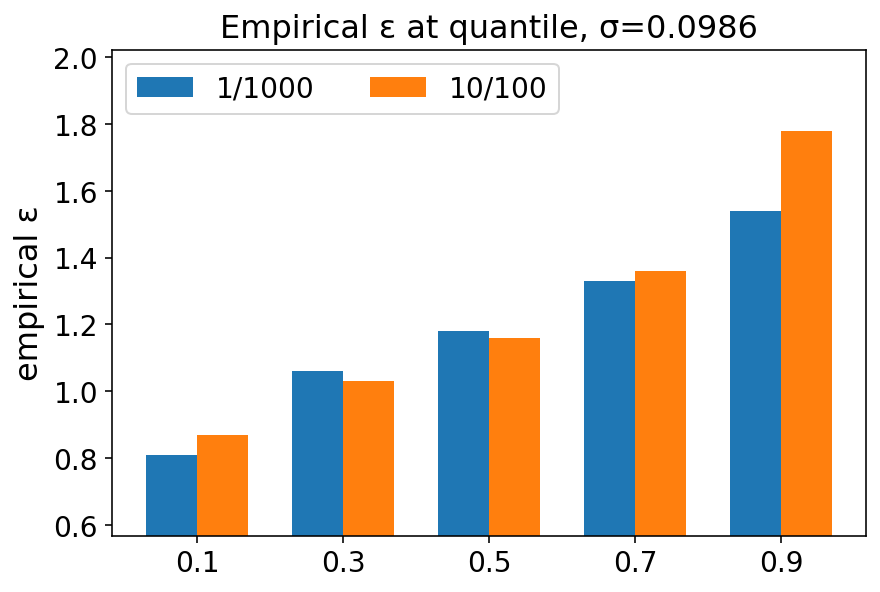}
\caption{Quantiles of $\hat{\varepsilon}$ over fifty experiments using either one run with 1000 canaries or ten runs with 100 canaries each. For both noise multipliers, the distributions are very close.}
\label{fig:eps_quantiles}
\end{figure*}

The results on the stackoverflow dataset with the same setup as in Section \ref{sec:experiments} are shown in Figure \ref{fig:eps_quantiles}. We report the 0.1, 0.3, 0.5, 0.7 and 0.9 quantile of the distribution of $\hat{\varepsilon}$ over 50 experiments. For both noise multipliers, the distributions are quite close. Our epsilon estimates do not seem to vary significantly even as the number of canaries per run is changed by an order of magnitude.


\end{document}